\newtheorem{theorem}{Theorem}[section]
\newtheorem{assumption}[theorem]{Assumption}
\newtheorem{lemma}[theorem]{Lemma}
\theoremstyle{definition}
\newtheorem{definition}[theorem]{Definition}
\theoremstyle{definition}
\newtheorem{remark}[theorem]{Remark}
\theoremstyle{definition}
\newcommand{\E}{{\mathbb{E}}}
\newcommand{\mbfU}{\mathbf{U}}
\newcommand{\mbfW}{\mathbf{W}}
\newcommand{\mbfY}{\mathbf{Y}}
\newcommand{\argmin}{\ensuremath{\operatorname*{argmin}}}
\newcommand{\dist}{\ensuremath{\operatorname{dist}}}
\def\argmin{ \mathop{{\rm argmin}}}
\newcommand{\scal}[2]{\left\langle{#1}\,,\,{#2}\right\rangle}
\providecommand{\keywords}[1]
{
  \small	
  \textbf{\textit{Keywords---}} #1
}
\title{A Linearized Alternating Direction Multiplier Method for Federated Matrix Completion Problems}
\author{
    Patrick Hytla$^{1}$ \\
    \and
    Tran T. A. Nghia$^{2}$ \\
    \and
    Duy Nhat Phan$^{1}$ \\
    \and
    Andrew Rice$^{1}$ \\
}
\date{}
\begin{document}
\maketitle
\footnotetext[1]{University of Dayton Research Institute, University of Dayton, 300 College Park, Dayton, 45469, Ohio, USA; emails: \{patrick.hytla,duynhat.phan,andrew.rice\}@udri.udayton.edu}
\footnotetext[2]{Department of Mathematics and Statistics, Oakland University, Rochester, MI 48309, USA; email: nttran@oakland.edu}
\footnotetext[3]{Author names are listed in alphabetical order.}

\begin{abstract}{\small 
 Matrix completion is fundamental for predicting missing data with a wide range of applications in personalized healthcare, e-commerce, recommendation systems, and social network analysis. Traditional matrix completion approaches typically assume centralized data storage, which raises challenges in terms of computational efficiency, scalability, and user privacy.  In this paper, we address the problem of federated matrix completion, focusing on scenarios where user-specific data is distributed across multiple clients, and privacy constraints are uncompromising. Federated learning provides a promising framework to address these challenges by enabling collaborative learning across distributed datasets without sharing raw data. We propose \texttt{FedMC-ADMM} for solving federated matrix completion problems, a novel algorithmic framework that combines the Alternating Direction Method of Multipliers with a randomized block-coordinate strategy and alternating proximal gradient steps. Unlike existing federated approaches, \texttt{FedMC-ADMM} effectively handles multi-block nonconvex and nonsmooth optimization problems, allowing efficient computation while preserving user privacy. We analyze the theoretical properties of our algorithm, demonstrating subsequential convergence and establishing a convergence rate of $\mathcal{O}(K^{-1/2})$, leading to a communication complexity of $\mathcal{O}(\epsilon^{-2})$ for reaching an $\epsilon$-stationary point. This work is the first to establish these theoretical guarantees for federated matrix completion in the presence of multi-block variables. To validate our approach, we conduct extensive experiments on real-world datasets, including MovieLens 1M, 10M, and Netflix. The results demonstrate that \texttt{FedMC-ADMM} outperforms existing methods in terms of convergence speed and testing accuracy.}
\end{abstract} 

\keywords{Federated learning, matrix completion, alternating direction multiplier method}

\section{Introduction}
Incomplete data is a common challenge across numerous domains, from personalized healthcare \cite{tran2021recommender} and e-commerce \cite{sivapalan2014recommender,ricci2010introduction,Koren2009} to content streaming platforms \cite{gomez2015netflix,kang2018self}, sensor networks \cite{Biswas2006}, social network analysis \cite{Kim2011}, and image processing \cite{Lui2013}. For instance, in recommendation systems, user preferences for items are  partially observed, leading to what is known as the matrix completion (MC) problem \cite{Koren2009}. Addressing this challenge is essential for enabling accurate predictions and informed decision-making across these diverse applications.

In a centralized setting, given an incomplete data matrix $M \in \mathbb{R}^{m \times n}$ with missing entries, the goal of the MC problem is to find factors $U \in \mathbb{R}^{m \times r}$ and $V \in \mathbb{R}^{r \times n}$ such that $M \approx UV$. This task is commonly formulated as the following optimization problem
\begin{equation*}\label{MF}
    \min_{U \in \mathcal{U}, V \in \mathcal{V}}\quad  \frac{1}{2} \|\mathcal{P}(M - UV)\|_F^2 + R_1(U) + R_2(V),
\end{equation*}
where $\mathcal{P}:\mathbb{R}^{m\times n}\to \mathbb{R}^{m\times n}$ is the linear operator defined by $\mathcal{P}(Z)_{ij} = Z_{ij}$, for $Z\in \mathbb{R}^{m\times n}$ if $M_{ij}$ is observed and $0$ otherwise. The sets $\mathcal{U}$ and $\mathcal{V}$ are constraints, while $R_1$ and $R_2$ are regularizers that impose specific structures on variables $U$ and $V$, respectively.

However, centralized data collection and processing face several challenges. Traditional MC methods demand extensive storage for observed data and substantial computational resources, particularly when working with large-scale datasets, leading to high resource costs. Moreover, these approaches often rely on single machines or small clusters, limiting scalability and efficiency in handling complex datasets. Privacy concerns further complicate centralized methods, especially in applications such as recommendation systems, where protecting sensitive user data is paramount \cite{canny2002collaborative}. Clients are often reluctant to share personal information, yet traditional matrix completion approaches assume full access to user activity data, creating a significant obstacle to adoption in privacy-sensitive contexts.

To address these challenges, we investigate the MC problem in the federated learning (FL) format. In this setting, we consider \( p \) clients, where the \( i  \)-th client holds a private data matrix \( M_i \in \mathbb{R}^{m_i \times n} \), with no overlap between the matrices held by different clients. A central server coordinates the clients to collaboratively solve the following optimization problem without requiring them to share raw data:
\begin{equation}\label{model}
    \begin{aligned}
    \min_{\substack{\mbfU, V}} \quad & F(\mbfU, V) := \frac{1}{p} \sum_{i=1}^p F_i(M_i, U_i V) + R(V), \\
    \text{subject to} \quad & U_i \in \mathcal{U}_i \text{ for } i \in \{1, \ldots, p\}, \quad V \in \mathcal{V},
    \end{aligned}
\end{equation}
where \( F_i(M_i, U_i V) \) represents the local loss function of client \( i \), \( R(V) \) is a regularizer imposing structure on the global variable \( V \), and \( \mathcal{U}_i \subseteq \mathbb{R}^{m_i \times r} \) and \( \mathcal{V} \subseteq \mathbb{R}^{r \times n} \) are constraints. Here, \( \mathbf{U} = [U_1; \ldots; U_p] \) contains client-specific variables, while \( V \) is a global variable shared across all clients. For example, in federated recommendation systems, each client $i$ may represent a different user group, where \( M_i \) is the incomplete user-item rating matrix, \( U_i \) is the user profiles for client \( i \), and \( V \) is the shared item attributes, see Figure \ref{rec}.

\begin{figure*}[!htpb] 
\vspace{-1ex}
\begin{center}
\includegraphics[width=0.9\linewidth]{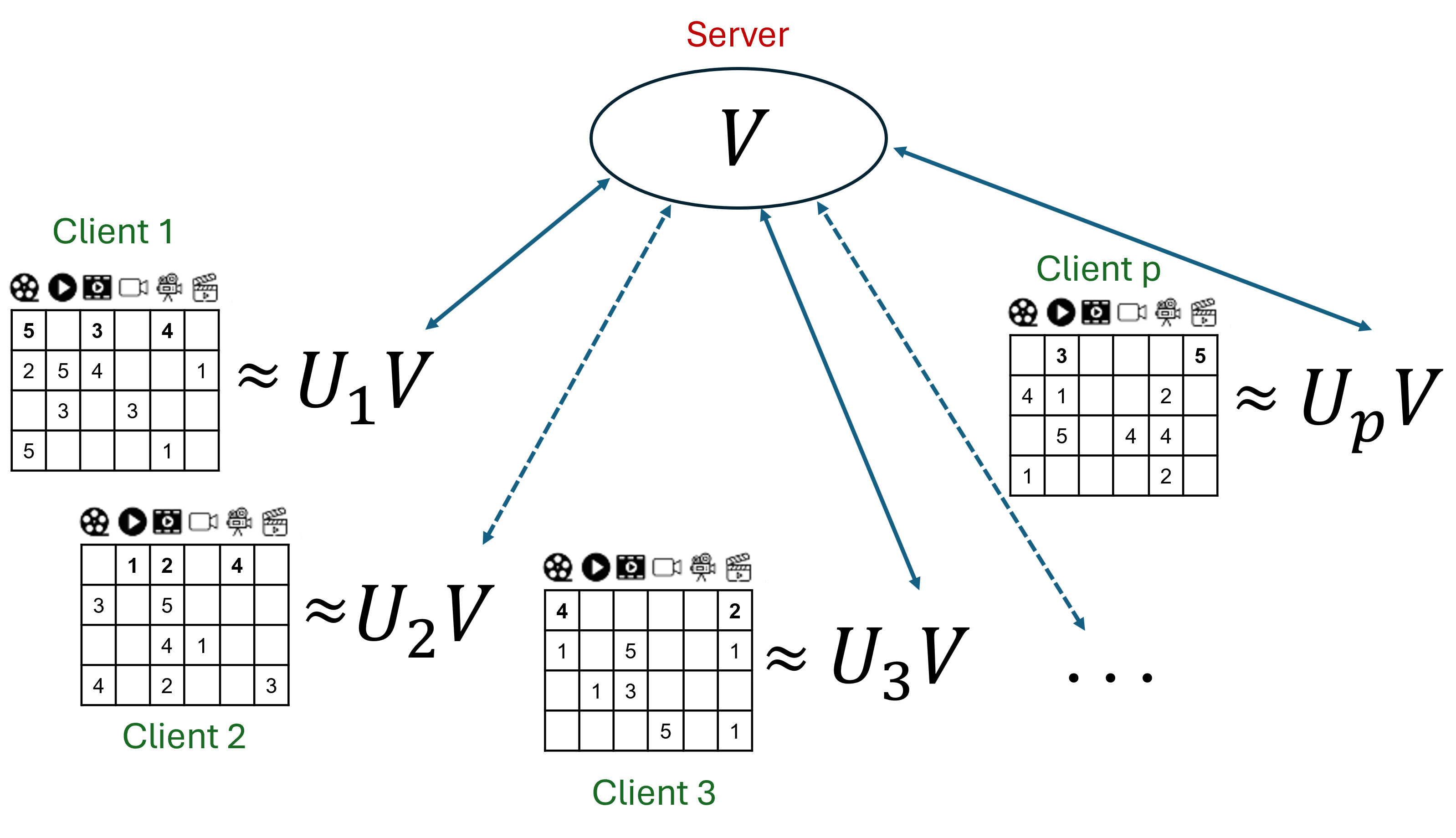}
\caption{The typical scenario of federated recommendation systems. Each client has their own user profiles and shares the same items.} \label{rec}
\end{center}
\vspace{-1ex}
\end{figure*}

Solving the federated MC problem \eqref{model} is challenging due to its nonconvexity, non-smoothness, and multi-block structure, particularly when balancing communication efficiency and privacy preservation. Recent advances in federated algorithms based on the Alternating Direction Method of Multipliers (ADMM) have demonstrated efficiency and scalability, achieving state-of-the-art performance \cite{huang2019dp,ding2019stochastic,zhou2023federated,phan4822244federated}. Building on this, we propose \texttt{FedMC-ADMM} (Federated Matrix Completion with randomized ADMM and proximal steps), a novel framework to address the MC problem in FL. Our approach integrates a revised ADMM with randomized block-coordinate updates and proximal steps, enabling efficient handling of multi-block variables, nonsmooth regularizers, and inexact subproblem solutions at the clients. This is the first method to incorporate randomized block-coordinate ADMM with proximal steps for the federated MC problem.

From a theoretical perspective, under mild assumptions, we establish subsequential convergences of the algorithm, proving that every limit point of the generated sequence is almost surely a stationary point of \eqref{model}. Furthermore, our analysis demonstrates that \texttt{FedMC-ADMM} achieves a convergence rate of \( \mathcal{O}(K^{-1/2}) \) near a stationary point and requires at most \( \mathcal{O}(\epsilon^{-2}) \) communication rounds to reach an \( \epsilon \)-stationary point. 
These results are particularly challenging to achieve for the nonconvex federated MC problem due to the interplay of nonconvexity, non-smoothness, and multi-block structures.

Finally, we validate the practical efficacy of \texttt{FedMC-ADMM} through numerical experiments on real-world datasets, including MovieLens 1M, MovieLens 10M, and Netflix. The results highlight the algorithm's capability to provide privacy-preserving, efficient, and scalable solutions for the MC problem in FL settings.

\section{Related Work}

To date, existing FL algorithms have yet to address the general federated MC problem \eqref{model}, which involves the challenges of nonconvexity, non-smoothness, and multi-block structures considered in this work. Accordingly, we discuss related literature on matrix completion, federated matrix factorization, and federated learning.

Traditional matrix completion methods are primarily designed for centralized settings, assuming full access to the data. These classical approaches rely on low-rank matrix factorization to impute missing entries \cite{Koren2009,Mazumder2010,Hien2023InertialBlockMM}. However, privacy concerns in matrix completion have recently gained attention. For instance, \cite{Chien2021PrivateALS} introduced private alternating least squares to achieve practical matrix completion with tighter convergence rates, while \cite{Wang2023Differential} proposed a differentially private approach based on low-rank matrix factorization. In the context of federated MC problems,  \cite{hegedHus2019decentralized,chai2020secure} explored methods using gradient sharing (GS), where a central server aggregates client gradients for gradient descent updates. Despite their potential, GS methods are known to suffer from slow convergence, requiring many communication rounds to achieve a quality model, as highlighted in \cite{mcmahan2017communication}. To address specific applications such as clustering, \cite{wang2022federated} proposed federated matrix factorization techniques based on model averaging and GS. It is important to note that these existing federated MC methods exclusively
 focus on a special case of \eqref{model}, where the regularization functions are either zero or smooth. As a result, they fail to tackle the broader challenges posed by the general federated MC problem, particularly those involving nonconvexity, non-smoothness, and multi-block structures.

Federated learning has made significant progress in addressing communication efficiency, statistical heterogeneity, and privacy preservation \cite{mcmahan2017communication,jain2021differentially,shen2023share}. Early methods such as FedAvg \citep{mcmahan2017communication} focused on averaging model updates across clients, while more recent approaches, such as FedProx \citep{li2020federated}, introduced a proximal penalty term to stabilize training in heterogeneous environments. SCAFFOLD \citep{karimireddy2020scaffold} further improved convergence rates by reducing variance in client updates. Despite these advances, these methods are primarily designed for training federated optimization problems with a single block of variables and are not suitable for solving \eqref{model}, which involves multi-block and nonsmooth regularizers.

ADMM is a well-established optimization technique \cite{glowinski1975approximation,gabay1976dual} with  applicability to nonconvex settings \cite{boct2020proximal,wang2019global}. In the context of federated learning, several approaches have been proposed: DP-ADMM \cite{huang2019dp}, which incorporates differential privacy through Gaussian noise; PS-ADMM \cite{ding2019stochastic}, a stochastic ADMM method with privacy guarantees; fedADMM \cite{zhou2023federated}, which combines ADMM with a cyclic rule for smooth federated optimization problems; and a randomized ADMM approach \cite{phan4822244federated} designed to improve efficiency in training neural networks within FL settings. These methods, however, do not generalize to the multi-block, nonconvex, nonsmooth MC problem considered in this work.

Building on these distributed matrix factorization techniques, many approaches have focused on parallelizing sequential algorithms like stochastic gradient descent or alternating least squares such as MapReduce \cite{LSMF_DSGD_2011,DMF_MRBJ_2013,FDSGD_MF_2014} or parameter servers \cite{Factorbird_2014}. Similar strategies have been applied to non-negative matrix factorization \cite{DNMF_WSDD_2010,DNMF_HALS_2017}. While decentralized methods \cite{wai2015consensus,pmlr-v70-hong17a,PPPIR_DMF_2019} remove the need for a central server, they often rely on shared memory or specific data partitions. These assumptions make decentralized methods incompatible with federated settings, where privacy and communication constraints are critical. Consequently, these existing approaches fail to address the unique challenges posed by federated matrix completion, which requires balancing privacy preservation, communication efficiency, and optimization across distributed clients.

In summary, our algorithm distinguishes itself from existing federated matrix completion and ADMM-based approaches by combining ADMM with randomized block-coordinate updates and proximal steps for MC in FL. Unlike prior methods, it addresses the challenges of multi-block, nonconvex, and nonsmooth optimization problems while achieving state-of-the-art convergence rates. Its scalability and privacy-preserving design also make it highly applicable to real-world federated recommendation systems.

\section{Notations and Preliminaries}\label{sec:preliminary}

The inner product in a finite-dimensional real linear space \(\mathbf{E}\) is represented by \(\scal{\cdot}{\cdot}\), and its corresponding induced norm \(\|\cdot\|\) is defined for any \(x \in \mathbf{E}\) as \(\|x\| := \sqrt{\scal{x}{x}}\). We define \(\mathbb{R}_+ := \{r \in \mathbb{R} : r \geq 0\}\) as the set of non-negative real numbers and use the notation \([p]\) to represent the set of integers \(\{1, 2, \dots, p\}\). For matrices \(U_i \in \mathbb{R}^{m_i \times r}\), \(i \in [p]\), we employ the notation \([U_1;U_2;\ldots;U_p]\) to represent their vertical concatenation, expressed as
\[
[U_1;U_2;\ldots;U_p] := \begin{bmatrix}
    U_1 \\ U_2 \\ \vdots \\ U_p
    \end{bmatrix} \in \mathbb{R}^{m \times r},
\]
where \(m = \sum_{i=1}^p m_i\). Given a nonempty closed set \(\mathcal{C} \subset \mathbf{E}\), we define the distance from a point \(x \in \mathbf{E}\) to \(\mathcal{C}\) as
\[
\dist(x, \mathcal{C}) := \inf_{y \in \mathcal{C}} \|x - y\|.
\]

An extended-real-valued function \(g : \mathbf{E} \to \mathbb{R} \cup \{+\infty\}\) is said to be proper if its domain \({\rm dom}\, g := \{x \in \mathbf{E} : g(x) < +\infty\}\) is nonempty. We call a function \(g\) lower semicontinuous if for every point \(x^* \in \mathbf{E}\), it satisfies
\[
g(x^*) \leq \liminf_{x \to x^*} g(x).
\]

Let \(\rho\) be a real number. A function \(g\) is called \(\rho\)-convex if \(g - \frac{\rho}{2} \|\cdot\|^2\) is convex. Equivalently, \(g\) is \(\rho\)-convex if, for any \(x, y \in \mathbf{E}\) and \(\lambda \in [0, 1]\),
\[
g((1 - \lambda)x + \lambda y) \leq (1 - \lambda)g(x) + \lambda g(y) - \frac{\rho}{2} \lambda(1 - \lambda) \|x - y\|^2.
\]
A function is convex if it is \(0\)-convex. If \(g\) is \(\rho\)-convex with \(\rho > 0\), it is referred to as \(\rho\)-strongly convex.

\begin{definition}[Fr\'echet and limiting subdifferentials, {\citep[Definition 8.3]{VariationalAnalysis}}] \label{def:dd}
Consider a proper, lower semicontinuous function \(f : \mathbf{E} \to \mathbb{R} \cup \{+\infty\}\).
\begin{enumerate}[label={\rm(\alph*)}]
    \item For a point \(x \in {\rm dom}\,f\), the \emph{Fr\'echet subdifferential} of \(f\) at \(x\), denoted by \(\hat{\partial}f(x)\), consists of all vectors \(v \in \mathbf{E}\) satisfying
    \[
    \liminf_{y \neq x, y \to x} \dfrac{f(y) - f(x) - \scal{v}{y - x}}{\|y - x\|}  \geq 0.
    \]
    We set \(\hat{\partial}f(x) = \emptyset\) for any \(x \notin {\rm dom}\,f\).
    
    \item For a point \(x \in {\rm dom}\,f\), the \emph{limiting subdifferential} of \(f\) at \(x\), denoted by \(\partial f(x)\), is defined as
    \[
    \partial f(x) := \big\{v \in \mathbf{E} : \exists\, x^k \to x, f(x^k) \to f(x), v^k \in \hat{\partial}f(x^k), v^k \to v\big\}.
    \]
\end{enumerate}
\end{definition}

For a convex function \(f\), the Fr\'echet and limiting subdifferentials coincide with the classical convex subdifferential, given by
\[
\partial f(x) = \big\{v\in \mathbf{E}: f(y) \geq f(x) + \scal{v}{y - x} \text{ for all } y \in \mathbf{E}\big\}.
\]

\begin{definition}[\(L\)-Lipschitz]
A mapping \(T : D \subset \mathbf{E} \to \mathbb{F}\) is said to be \(L\)-Lipschitz on \(D\), with constant \(L \geq 0\), if 
\[
\|T(x) - T(y)\| \leq L \|x - y\| \quad \text{for all} \quad x, y \in D.
\]
\end{definition}

\begin{definition}[\(L\)-Smooth]
A function \(f : \mathbf{E} \to \mathbb{R}\) is said to be \(L\)-smooth on \(D\) if it is differentiable everywhere on \(D\) and its gradient \(\nabla f\) is \(L\)-Lipschitz on \(D\).
\end{definition}

We now recall some fundamental properties that will be used throughout this work.

\begin{lemma}\label{convexity-smoothness}
Let \(g\) and \(h\) be proper and lower semicontinuous functions. Then, the following statements hold.
\begin{enumerate}[label={\rm(\alph*)}]
    \item If \(g\) attains a local minimum at \(\bar{x} \in {\rm dom}\,g\), then \(0 \in \partial g(\bar{x})\).
    
    \item If \(f = g + h\) and \(h\) is continuously differentiable in a neighborhood of \(\bar{x}\), then \(\partial f(\bar{x}) = \partial g(\bar{x}) + \nabla h(\bar{x})\).
    
    \item If \(g\) is \(\alpha\)-strongly convex and \(y = \argmin_z \{g(z) + \frac{\rho}{2} \|z - x\|^2\}\), then for all \(x \in \mathbf{E}\): \label{cvx:min}
    \[
    g(x) \geq g(y) + \frac{2\rho + \alpha}{2}\|x - y\|^2.
    \] 
    
    \item If \(g\) is \(L\)-smooth, then for all \(x, y \in \mathbf{E}\):
    \[
    |g(x) - g(y) - \scal{\nabla g(y)}{x - y}| \leq \frac{L}{2}\|x - y\|^2.
    \]
\end{enumerate}
\end{lemma}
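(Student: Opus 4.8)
The plan is to prove the four items independently, each reducing to a standard fact from variational analysis; only the sum rule in (b) requires a short genuine argument.

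For (a), I would argue straight from Definition \ref{def:dd}(a). If $\bar x$ is a local minimizer of $g$, then $g(y) - g(\bar x) \ge 0$ for all $y$ in a neighborhood of $\bar x$, so the difference quotient defining the Fr\'echet subdifferential with $v = 0$ is nonnegative near $\bar x$; hence $0 \in \hat\partial g(\bar x) \subseteq \partial g(\bar x)$. For (b), I would first establish the Fr\'echet-level identity $\hat\partial f(\bar x) = \hat\partial g(\bar x) + \nabla h(\bar x)$: since $h$ is $C^1$ near $\bar x$, $h(y) = h(\bar x) + \scal{\nabla h(\bar x)}{y - \bar x} + o(\|y - \bar x\|)$, and substituting $f = g + h$ into the liminf quotient shows $v \in \hat\partial f(\bar x)$ if and only if $v - \nabla h(\bar x) \in \hat\partial g(\bar x)$. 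Then I would pass to the limiting subdifferential: for $v \in \partial f(\bar x)$ pick $x^k \to \bar x$ with $f(x^k) \to f(\bar x)$ and $v^k \in \hat\partial f(x^k)$, $v^k \to v$; continuity of $h$ gives $g(x^k) \to g(\bar x)$, continuity of $\nabla h$ gives $\nabla h(x^k) \to \nabla h(\bar x)$, and $v^k - \nabla h(x^k) \in \hat\partial g(x^k)$ converges to $v - \nabla h(\bar x) \in \partial g(\bar x)$. The reverse inclusion follows by symmetry, writing $g = f + (-h)$. (This is also \cite[Exercise 8.8]{VariationalAnalysis}, which one may simply cite.)

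For (c), I would set $\phi(z) := g(z) + \tfrac{\rho}{2}\|z - x\|^2$, which is $(\alpha+\rho)$-strongly convex because $g$ is $\alpha$-strongly convex. Since $y$ minimizes $\phi$ we have $0 \in \partial \phi(y)$ by part (a), and combining this subgradient with strong convexity yields the quadratic lower bound $\phi(z) \ge \phi(y) + \tfrac{\alpha+\rho}{2}\|z - y\|^2$ for every $z$. Evaluating at $z = x$ and using $\phi(x) = g(x)$ together with $\phi(y) = g(y) + \tfrac{\rho}{2}\|y - x\|^2$ gives exactly $g(x) \ge g(y) + \tfrac{2\rho+\alpha}{2}\|x - y\|^2$. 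For (d), I would integrate along the segment from $y$ to $x$: $g(x) - g(y) - \scal{\nabla g(y)}{x - y} = \int_0^1 \scal{\nabla g(y + t(x-y)) - \nabla g(y)}{x - y}\,dt$, then bound the integrand by Cauchy--Schwarz and the $L$-Lipschitz continuity of $\nabla g$ to get $\int_0^1 L t \|x - y\|^2\,dt = \tfrac{L}{2}\|x - y\|^2$, and the lower bound follows identically by applying the same estimate with $|\cdot|$.

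The main obstacle, such as it is, lies entirely in the limiting-subdifferential passage in (b): one must check that the required function-value convergence $g(x^k)\to g(\bar x)$ holds (it follows from continuity of $h$) and that $\nabla h(x^k)\to\nabla h(\bar x)$ (continuity of $\nabla h$, which is where $C^1$ rather than mere differentiability is used). All remaining steps are immediate from the definitions or a one-line computation.
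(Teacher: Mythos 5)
Your proposal is correct, and in fact supplies more than the paper does: the paper's ``proof'' of this lemma consists only of citations to \cite[Theorem 8.15]{VariationalAnalysis}, \cite[Exercise 8.8]{VariationalAnalysis}, \cite[Theorem 6.39]{beck2017first}, and \cite[Lemma 1.2.3]{nesterov2018lectures}, and your arguments are precisely the standard proofs behind those four citations (including the correct bookkeeping in (c), where $\phi(x)=g(x)$ and $\phi(y)=g(y)+\tfrac{\rho}{2}\|y-x\|^2$ combine with $(\alpha+\rho)$-strong convexity to give the $\tfrac{2\rho+\alpha}{2}$ constant). No gaps.
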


\begin{proof}
(a) See \cite[Theorem 8.15]{VariationalAnalysis}. 
(b) See \cite[Exercise 8.8]{VariationalAnalysis}. 
(c) See \cite[Theorem 6.39]{beck2017first}. 
(d) See \cite[Lemma 1.2.3]{nesterov2018lectures}.
\end{proof}

We develop a probabilistic framework, following the approach in \cite{Cannelli2019Asynchronous}, to analyze the behavior of Algorithm \ref{FedLMFmm}. For each communication round \(k \in \mathbb{N}\), let \(S_k\) represent a subset of clients \([p]\). Define \(\Omega\) as the sample space containing all possible sequences \(\omega = \{w_k\}_{k=0}^{\infty}\), where \(\omega_k = S_k\). We construct a filtration \(\{\mathcal{F}_k\}\) on \(\Omega\) as follows. For any \(k \geq 1\) and given sequence \((S_0, S_1, \ldots, S_{k-1})\), we define the cylinder set
\[
C(S_0, S_1, \ldots, S_{k-1}) := \{\omega \in \Omega : \omega_0 = S_0, \omega_1 = S_1, \ldots, \omega_{k-1} = S_{k-1}\}.
\]
Let \(C^k\) denote the collection of all such cylinder sets \(C(S_0, S_1, \ldots, S_{k-1})\). We then define \(\mathcal{F}_k := \sigma(C^k)\) as the \(\sigma\)-algebra generated by \(C^k\), and \(\mathcal{F} := \sigma\left(\bigcup_{k=1}^{\infty} C^k\right)\). The sequence \(\{\mathcal{F}_k\}\) forms a filtration satisfying \(\mathcal{F}_k \subset \mathcal{F}_{k+1} \subset \mathcal{F}\) for all \(k \geq 1\). The \(\sigma\)-algebra \(\mathcal{F}\) is equipped with a probability measure \(P\), creating the probability space \((\Omega, \mathcal{F}, P)\). We denote by \(\E_k[\cdot]\) the conditional expectation \(\E[\cdot | \mathcal{F}_k]\) given \(\mathcal{F}_k\), and use \(\E[\cdot]\) for the total expectation.

The following result provides the theoretical foundation for our convergence analysis.

\begin{lemma}[Supermartingale Convergence, {\citep[Theorem 1]{robbins1971convergence}}]
\label{supermartingale}
Let \(\{Y_k\}, \{Z_k\}, \{W_k\}\) be three sequences of random variables, and \(\{\mathcal{F}_k\}\) be a filtration such that \(\mathcal{F}_k \subset \mathcal{F}_{k+1}\) for all \(k\). Assume that:
\begin{enumerate}[label={\rm(\alph*)}]
    \item The random variables \(Y_k, Z_k\), and \(W_k\) are nonnegative and \(\mathcal{F}_k\)-measurable for each \(k\).
    
    \item For each \(k\), the conditional expectation satisfies \(\E_k[Y_{k+1}] \leq Y_k - Z_k + W_k\).
    
    \item Almost surely, \(\sum_{k=0}^\infty W_k < +\infty\).
\end{enumerate}
Then, almost surely, \(\sum_{k=0}^\infty Z_k < +\infty\) and the sequence \(\{Y_k\}\) converges to a nonnegative random variable.
\end{lemma}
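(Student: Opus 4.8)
This is the classical supermartingale convergence theorem of Robbins and Siegmund, and I would follow the standard localization argument. The key difficulty is that the summability $\sum_{k} W_k < +\infty$ is only assumed to hold almost surely --- not in $L^1$ and not with a uniform bound --- so one cannot simply add a tail sum of the $W_k$'s to $Y_k$ to form a supermartingale. The plan is instead to (i) reduce to the case where the partial sums of $\{W_k\}$ are bounded by a deterministic constant, (ii) exhibit an explicit nonnegative supermartingale in that case, (iii) invoke the almost-sure convergence theorem for nonnegative supermartingales, and (iv) let the bound tend to infinity.

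\emph{Localization.} For a fixed integer $m\ge 1$, introduce the $\{0,1\}$-valued weights $\beta_k := \mathbf{1}\{\sum_{j=0}^{k-1} W_j \le m\}$, with $\beta_0:=1$. Since each $W_j$ is $\mathcal F_j$-measurable, $\beta_k$ is $\mathcal F_{k-1}$-measurable (hence $\mathcal F_k$-measurable), and since the $W_j$ are nonnegative, $k\mapsto\beta_k$ is nonincreasing. On the event $A_m := \{\sum_{j=0}^{\infty} W_j \le m\}$ one has $\beta_k\equiv 1$; by hypothesis (c) the sets $A_m$ increase to $\Omega$ up to a null set, so it suffices to establish the conclusion on each $A_m$.

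\emph{Auxiliary supermartingale.} I would put
\[
X_k := \beta_k Y_k + \sum_{j=0}^{k-1}\beta_{j+1}\,Z_j - \sum_{j=0}^{k-1}\beta_{j+1}\,W_j ,
\]
which is $\mathcal F_k$-measurable. Pulling the $\mathcal F_k$-measurable factor $\beta_{k+1}$ out of $\E_k[\beta_{k+1}Y_{k+1}]$, applying hypothesis (b), and using $\beta_{k+1}Y_k\le\beta_kY_k$ (valid since $Y_k\ge0$ and $\beta_{k+1}\le\beta_k$), a short computation cancels the $Z_k$ and $W_k$ terms and leaves $\E_k[X_{k+1}]\le X_k$. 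Since $\beta_{j+1}=1$ forces $\sum_{i=0}^{j}W_i\le m$, the last sum in $X_k$ never exceeds $m$, so with $\beta_kY_k\ge0$ and $Z_j\ge0$ we get $X_k\ge -m$. Thus $X_k+m$ is a nonnegative supermartingale (the conditional expectations being finite is implicit in the hypotheses), and the standard convergence theorem gives that $X_k$ converges almost surely to a finite limit.

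\emph{Transfer.} On $A_m$, $\beta_k\equiv1$ reduces $X_k$ to $Y_k+\sum_{j=0}^{k-1}Z_j-\sum_{j=0}^{k-1}W_j$; since $\sum_{j<k}W_j$ converges on $A_m$, the sequence $Y_k+\sum_{j<k}Z_j$ converges almost surely on $A_m$. Because $Y_k\ge0$ and $\sum_{j<k}Z_j$ is nondecreasing, convergence of their sum forces both $\sum_{j=0}^{\infty}Z_j<+\infty$ and convergence of $Y_k$ almost surely on $A_m$, with a nonnegative limit. Taking the union over $m$ and using hypothesis (c) yields the claim on all of $\Omega$. As noted, the only genuinely delicate point is the localization: producing a true supermartingale from the merely almost-sure summability of $\{W_k\}$, which is precisely what the weights $\beta_k$ deliver.
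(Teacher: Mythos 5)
Your proof is correct, but note that the paper does not actually prove this lemma: it is quoted verbatim from Robbins and Siegmund (Theorem 1 of the cited reference) and used as a black box. What you have written is essentially the classical argument from that reference — localize on the events $A_m=\{\sum_j W_j\le m\}$ via the predictable indicator weights $\beta_k$, verify that $X_k=\beta_kY_k+\sum_{j<k}\beta_{j+1}Z_j-\sum_{j<k}\beta_{j+1}W_j$ is a supermartingale bounded below by $-m$, apply nonnegative-supermartingale convergence, and transfer back on $A_m$. All the individual steps check out: $\beta_{k+1}$ is $\mathcal F_k$-measurable so it can be pulled out of $\E_k[\cdot]$, the telescoping of the $Z_k$ and $W_k$ terms is exact, and the monotonicity $\beta_{k+1}\le\beta_k$ together with $Y_k\ge 0$ closes the supermartingale inequality. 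The one point you gloss over slightly is the integrability needed to invoke the convergence theorem for $X_k+m$: the hypotheses as stated do not guarantee $\E[Y_0]<\infty$, so strictly one should add a second localization on $\{Y_0\le c\}$ (or work conditionally on $\mathcal F_0$) before citing the convergence theorem; this is routine and does not affect the structure of the argument.
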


\section{Algorithm Description and Convergence Analysis}\label{sec:algo}

\subsection{Algorithm Description}

As discussed in the Introduction section, the main objective of our paper is designing an algorithm to solve the optimization problem~\eqref{model}. Each loss function $F_i(M_i, U_iV)$, $i\in [p]$ is a function of $(U_i,V)\in \mathbb{R}^{m_i\times r}\times \mathbb{R}^{r\times n}$, as $M_i$ is a known $m\times n$ matrix.  To address further  the matrix completion problem under the federated learning (FL) framework, we may assume that the loss function $F_i$ for client \(i\) takes the form
\begin{equation}\label{eq:F_i}
F_i(M_i, U_i V) = f_i(U_i, V) + R_i(U_i),
\end{equation}
where \(f_i:\mathbb{R}^{m_i \times r} \times \mathbb{R}^{r \times n} \to \mathbb{R}\) is a continuously differentiable function and \(R_i:\mathbb{R}^{m_i\times r}\to \mathbb{R}\cup\{+\infty\}\) is a local regularizer for client \(i\) that is a proper lower semi-continuous function. To reformulate problem \eqref{model} for decentralized optimization, we introduce auxiliary variables \(W_i = V\) for \(i \in[p]\) and rewrite  it  as
\begin{equation}\label{admmmodel}
    \begin{aligned}
        \min_{\substack{U_i \in \mathcal{U}_i, W_i \in \mathbb{R}^{r \times n}, V \in \mathcal{V}}} \quad & \frac{1}{p} \sum_{i=1}^p \left[f_i(U_i, W_i) + R_i(U_i)\right] + R(V) \\
        \text{subject to} \quad & W_i - V = 0 \quad \forall i \in [p].
    \end{aligned}
\end{equation}
The {\em augmented Lagrangian function} corresponding to this  problem is defined by
\begin{equation}\label{def:AugL}
\mathcal{L}(\mathbf{U}, \mbfW, V, \mathbf{Y}) := \sum_{i=1}^p \mathcal{L}_i(U_i, W_i, V, Y_i) + R(V),
\end{equation}
where \(\mathbf{U} := [U_1; \ldots; U_p]\), \(\mbfW := [W_1; \ldots; W_p]\), \(\mathbf{Y} := [Y_1; \ldots; Y_p]\), and 
\begin{equation}\label{def:AL}
\mathcal{L}_i(U_i, W_i, V, Y_i) := \frac{1}{p} \left[f_i(U_i, W_i) + R_i(U_i)\right] + \langle Y_i, W_i - V \rangle + \frac{\beta}{2} \|W_i - V\|^2
\end{equation}
with some penalty parameter \(\beta > 0\).

The classical ADMM scheme \cite{glowinski1975approximation, gabay1976dual} for solving \eqref{admmmodel} iteratively updates \((\mathbf{U}, \mbfW)\), \(V\), and \(\mathbf{Y}\) as follows
\begin{align}
    (U_i^{k+1}, W_i^{k+1}) & \in \argmin_{U_i \in \mathcal{U}_i, W_i} \mathcal{L}_i(U_i, W_i, V^k, Y_i^k), \quad \forall i \in [p], \label{subprob1} \\
    V^{k+1} & \in \argmin_{V \in \mathcal{V}} \sum_{i=1}^p \left[\langle Y_i^k, W_i^{k+1} - V \rangle + \frac{\beta}{2} \|W_i^{k+1} - V\|^2\right] + R(V), \label{subprob2} \\
    Y_i^{k+1} & = Y_i^k + \beta (W_i^{k+1} - V^{k+1}), \quad \forall i \in [p]. \label{subprob3}
\end{align}
This method is highly suitable for Federated Learning, as  each client can keep their data private for its own sake without sharing it with the server and any other clients.  Indeed, after the local training \eqref{subprob1},  the client $i$ keeps $U_i^{k+1}$ and sends the pair $(W_i^{k+1}, Y_i^{k})$ to the server to solve problem~\eqref{subprob2} and update $V^{k+1}$. It is not possible for the server or any other clients to approximate the data $M_i$ by using only $(W_i^{k+1}, Y_i^{k})$ without $U_i^{k+1}$. The whole process does not violate the data privacy of each client at all. However, this classical ADMM scheme faces significant challenges when applied to the federated MC problem \eqref{admmmodel}. One of the primary limitations lies in its lack of theoretical convergence guarantees under nonconvex settings. Specifically, both \(F_i\) and \( R \) are often nonsmooth, whereas classical ADMM generally requires at least one of these components to be smooth to guarantee theoretical convergence. Furthermore, the linear operator \( A = [-I_{rn}; \ldots; -I_{rn}] \), which enforces the equality constraints \( W_i = V \), is not surjective. This violates key conditions for convergence guarantees in nonconvex ADMM as established in \cite{boct2020proximal, wang2019global, hien2022inertial}. Another major drawback is the computational difficulty associated with solving the subproblem \eqref{subprob1} for updating \((U_i, W_i)\) simultaneously. These subproblems are inherently nonconvex and nonsmooth, making them computationally expensive and challenging to solve efficiently, particularly for large-scale problems with federated constraints. Communication inefficiency is also a significant limitation of classical ADMM in FL. Frequent communication between local clients and the central server to update 
\((U_i, W_i)\), \(V\), and \(\mathbf{Y}\) at every iteration incurs high overhead, particularly as the number of clients grows. The algorithm is also sensitive to slower clients, which can delay progress and degrade system performance. Moreover, classical ADMM lacks mechanisms to handle clients with limited transmission resources or computational capacity, further exacerbating scalability issues. Its rigid requirement for all clients to update their local models at each iteration, irrespective of constraints, reduces efficiency. By contrast, updating only a subset of clients can achieve similar global performance while alleviating these challenges.

To address the above challenges, we propose a novel algorithm, \texttt{FedMC-ADMM}, which is specifically designed for the federated matrix completion problem. This algorithm employs a randomized block-coordinate strategy, which allows for partial client participation in each iteration, thereby significantly reducing communication costs and mitigating the impact of lower clients. By incorporating alternating proximal gradient updates for local subproblems, \texttt{FedMC-ADMM} ensures closed-form solutions in many practical cases, enhancing computational efficiency. Furthermore, the update order of dual variables \(\mathbf{Y}\) and the global variable \(V\) is modified, which guarantees convergence properties and stability. These modifications collectively enable \texttt{FedMC-ADMM} to provide a robust, scalable, and efficient solution for the federated MC problem.

Specifically, instead of solving \eqref{subprob1} in both blocks \(U_i\) and \(W_i\) at the same time that could be impractical in many situations especially when the function $R_i$ is not strongly convex, we fix \(W_i = W_i^k\) first and update \(U_i\) through \(N\) proximal gradient steps. For \(l = 1, \ldots, N\), we linearize \(f_i\) with respect to \(U_i\) at \(U_i^{k,l-1}\), and update \(U_i^{k,l}\) via a proximal step as follows
\begin{equation}\label{125a}
    U_i^{k,l} \in \argmin_{U_i \in \mathcal{U}_i}\; \langle \nabla_U f_i(U_i^{k,l-1}, W_i^k), U_i \rangle + \frac{L_{W_i^k}}{2} \|U_i - U_i^{k,l-1}\|^2 + R_i(U_i),
\end{equation}
where \(U_i^{k,0} = U_i^k\) and $L_{W_i^k}$ is the Lipschitz constant  of  $\nabla_U f_i(\cdot,W_i^k)$ from our later standing assumption~\eqref{average-smoothness}. In many popular cases of regularizers $R_i$, $U_i^{k,l}$ can have closed forms. After completing \(N\) steps, \(U_i^{k+1}\) is set as \(U_i^{k,N}\).

Next, we fix \(U_i = U_i^{k+1}\) in \eqref{subprob1} and update \(W_i\) through a similar process by linearizing \(f_i\) with respect to \(W_i\) at \(W_i^{k,l-1}\), $l=1,\ldots,N$ with $W_i^{k,0}=W_i^k$ and updating \(W_i^{k,l}\) as
\begin{equation*}\label{125b}
    W_i^{k,l} = \argmin_{W_i}\; \langle \nabla_V f_i(U_i^{k+1}, W_i^{k,l-1}) / p + Y_i^k, W_i \rangle + \frac{L_{U_i^{k+1}} / p}{2} \|W_i - W_i^{k,l-1}\|^2 + \frac{\beta}{2} \|W_i - V^k\|^2,
\end{equation*}
where $L_{U_i^{k+1}}$ the Lipschitz constant   of the function $f_i(U_i^{k+1}, \cdot)$. Notably, this subproblem admits a closed-form solution
\begin{equation}\label{eq:W_i}
    W_i^{k,l} = \frac{L_{U_i^{k+1}} / p \cdot W_i^{k,l-1} + \beta V^k -\nabla_V f_i(U_i^{k+1}, W_i^{k,l-1})/p - Y_i^k}{L_{U_i^{k+1}} / p + \beta}.
\end{equation}
After completing \(N\) steps, \(W_i^{k+1}\) is set as \(W_i^{k,N}\).

\begin{remark}[Another way of updating $W_i^{k+1}$] \label{RemP}{\rm It is possible to update $W_i$ by linearization from \eqref{subprob1} by solving another optimization problem
\begin{equation*}\label{eq:Win}
W_i^{k+1} \in  \argmin_{W_i}\; \langle \nabla_V f_i(U_i^{k+1}, W_i^{k}) / p + Y_i^k, W_i \rangle +  \frac{\beta}{2} \|W_i - V^k\|^2,
\end{equation*}
which gives us that 
 \begin{equation}\label{eq:Wik}
0=\frac{1}{p}\nabla_{V} f_i(U_i^{k+1}, W^{k+1}_i)+Y^k_i+\beta (W^{k+1}_i-V^k).
 \end{equation}
 This update is different from the one in \eqref{eq:W_i} and seems to be less expensive. Let us analyze a bit more these two updates for the  ferderated matrix completion problem in the most popular case $f(U_i,W_i)=\frac{1}{2}\|\mathcal{P}_{\Omega_i}(M_i-U_iW_i)\|^2$, where $\Omega_i=\{(t,j)\in[m_i]\times[n]|\, (M_i)_{tj}\neq 0\}$, which is the observation index from $M_i$ and 
 $\mathcal{P}_{\Omega_i}:\mathbb{R}^{m_i\times n}\to \mathbb{R}^{m_i\times n}$ is the projection operator defined by $[\mathcal{P}_{\Omega_i}(Z)]_{tj}=Z_{tj}$ if ${(t,j)}\in \Omega_i$ and $0$ otherwise. In this case, formula \eqref{eq:Wik} turns into  
\[
0=\left(U_i^{k+1}\right)^T\mathcal{P}_{\Omega_i}(U_i^{k+1} W^{k+1}_i-M_i)/p+Y^k_i+\beta (W^{k+1}_i-V^k).
\]
It is not clear how to compute $W^{k+1}_i$ directly from this equation. But, by looking into each column $j\in[n]$ of $W^{k+1}_i$, we observe that
\[
0=\left[\sum_{t:(t,j)\in\Omega_i}\left(\left(U_i^{k+1}\right)_{t:}\right)^T\left(U_i^{k+1}\right)_{t:}(W^{k+1}_i)_{:j}/p+\beta(W^{k+1}_i)_{:j}\right]-\left(\left(U_i^{k+1}\right)^T\mathcal P_{\Omega_i}(M_i)/p+\beta V^k-Y^k_i\right)_{:j},
\]
which gives us that 
 \[
 (W_i^{k+1})_{:j}=\left[\sum_{t:(t,j)\in\Omega_i}\left(\left(U_i^{k+1}\right)_{t:}\right)^T\left(U_i^{k+1}\right)_{t:}/p+\beta\mathbb{I}_{r\times r}\right]^{-1}\left(\left(U_i^{k+1}\right)^T\mathcal P_{\Omega_i}(M_i)/p+\beta V^k-Y^k_i\right)_{:j},
 \]
 where $(W_i^{k+1})_{:j}$ denotes the $j$-th column of $W_i^{k+1}$, while $(U_i^{k+1})_{t:}$ denotes $t$-th row of $U_i^{k+1}$. As $\Omega_i$ is usually a small subset of $[m_i]\times [n]$ and $r$ is also a small number in the federated matrix completion problem, computing the inverse matrix in the above formula is quite simple. However, we have to compute totally $n$ different inverse matrices to update the whole $W_i^{k+1}$. As $n$ is usually a very large number (up to $17,770$ for the Netflix data; see Section~5), this proceed of updating $W_i^{k+1}$ turns out to be very expensive. 
 
 On the other hand, our update $W^{k,l}_i$ in \eqref{eq:W_i} is quite simple at every step $l=1, \ldots, N$ without any inversion and the number $N$ is usually chosen to be small in our analysis (up to $30$ in very large datasets). The only extra step that we have to compute is the Lipschitz constant $L_{U_i^{k+1}}$  of $\nabla_V f_i(U_i^{k+1}, \cdot)$ in \eqref{eq:W_i}. In this case, for any $W,V\in \mathcal{V}$, with the typical Frobenius norm $\|\cdot\|_F$ for matrices  note that
 \begin{equation}
 \begin{aligned}
 \|\nabla_{V} f_i(U_i^{k+1}, W)-\nabla_{V} f_i(U_i^{k+1}, V)\|_F&=\|(U_i^{k+1})^T\mathcal{P}_{\Omega_i}(U_i^{k+1}(W-V))\|_F\\
 &=\left\|\left[\sum_{t:(t,j)\in\Omega_i}\left(\left(U_i^{k+1}\right)_{t:}\right)^T\left(U_i^{k+1}\right)_{t:}(W-V)_{:j}\right]_{j=1,\ldots,n}\right\|_F\\
&\le \sum_{j}^n\sqrt{\left\|\sum_{t:(t,j)\in\Omega_i}\left(\left(U_i^{k+1}\right)_{t:}\right)^T\left(U_i^{k+1}\right)_{t:}\right\|\cdot\|(W-V)_{:j}\|^2}\\
&\le \max_{1\le j\le n}\sqrt{\left\|\sum_{t:(t,j)\in\Omega_i}\left(\left(U_i^{k+1}\right)_{t:}\right)^T\left(U_i^{k+1}\right)_{t:}\right\|}\cdot \|W-V\|_{F}\\
&\le \|(U_i^{k+1})^TU_i^{k+1}\|_F\cdot\|W-V\|_{F},
 \end{aligned}
 \end{equation}
 which means that $L_{U_i^{k+1}}$ can be chosen as $\|(U_i^{k+1})^TU_i^{k+1}\|_F$, which is also easy to compute .
 \hfill$\triangle$
}
\end{remark}

As discussed after \eqref{subprob3}, the classical ADMM scheme for the federated matrix completion problem may face lots of challenges in convergence guarantee, as the model does not satisfy the traditional conditions for the ADMM iteration to be converged. We come up with the idea of switching the update order of \(V\) in \eqref{subprob2} and \(\mathbf{Y}\) in \eqref{subprob3}. Surprisingly, this swap allows us to  obtain reasonable convergence and complexity for our algorithm proved later in this section.  After updating \(U_i^{k+1},W_i^{k+1}\) in \eqref{125a} and \eqref{eq:W_i}, each client \(i\) updates its dual variable \(Y_i^{k+1}\) as
\begin{equation}\label{eq:Y_i}
    Y_i^{k+1} = Y_i^k + \beta (W_i^{k+1} - V^k).
\end{equation}


It remains to update $V_{k+1}$. Instead of requiring updates from all local clients at every communication, we employ a randomized block-coordinate strategy that can help the server proceed  quickly. In this approach, only a subset of clients, \(S_k \subseteq [p]\), perform local updates and send their updated models \((W_i^{k+1}, Y_i^{k+1})\) to the server.  Other clients  \(i \notin S_k\) can rest in this round, i.e., their local variables remain unchanged: \(U_i^{k+1} \leftarrow U_i^k\), \(W_i^{k+1} \leftarrow W_i^k\), and \(Y_i^{k+1} \leftarrow Y_i^k\).   Finally, the server updates the global variable \(V^{k+1}\) by solving
\begin{equation*}
    V^{k+1} \in \argmin_{V \in \mathcal{V}} \sum_{i=1}^p \left[\langle Y_i^{k+1}, W_i^{k+1} - V \rangle + \frac{\beta}{2} \|W_i^{k+1} - V\|^2\right] + R(V).
\end{equation*}
To make sure the algorithm is convergent, the choice of $S_k$ at  round $k$ has to follow a uniform randomness such that each client $i$, $i\in [p]$ is chosen to participate with probability at least $p_i>0$. The probability $p_i$ for each client could be different and probably depend on various initial situations or negotiations between the clients and the training center/server. This algorithm design reduces communication overhead, enables efficient handling of (possibly nonconvex) subproblems, and mitigates the impact of stragglers by allowing partial client updates in each communication. We summarize them in Algorithm \ref{FedLMFmm}.

\begin{algorithm}[!htpb]
\caption{FedMC-ADMM}
\label{FedLMFmm}
\SetKwInput{KwInput}{Input}
\SetKwRepeat{Do}{Repeat}{until}
\DontPrintSemicolon

\KwInput{$V^0 \in \mathbb{R}^{r \times n}$, $U_i^0 \in \mathbb{R}^{m_i \times r}$, $W_i^0 = V^0$, $Y_i^0 = -\frac{1}{p} \nabla_{V} f_i(U_i^0, W_i^0)$, $\beta > 0$, and $k = 0$.}

\Do{stopping criterion is met}{
\textbf{[Active users]} The server generates a subset of clients \(S_k \subset [p]\). 

\textbf{[Communication]} The server sends \(V^k\) to each client \(i \in S_k\). 

\textbf{[Local update]} For each client \(i \in S_k\): 
\begin{enumerate}[label={\rm(\alph*)}]
    \item Update \(U_i^{k+1}\) as follows
    \begin{enumerate}
        \item[(i)] For \(l = 1, \dots, N\)
        \begin{equation}\label{u:update}
            U_i^{k,l} \in \argmin_{U_i \in \mathcal{U}_i} \langle \nabla_U f_i(U_i^{k,l-1}, W_i^k), U_i \rangle + \frac{L_{W_i^k}}{2} \|U_i - U_i^{k,l-1}\|^2 + R_i(U_i),
        \end{equation}
        where \(U_i^{k,0} = U_i^k\).
        \item[(ii)] Set \(U_i^{k+1} = U_i^{k,N}\).
    \end{enumerate}

    \item Update \(W_i^{k+1}\) as follows
    \begin{enumerate}
        \item[(i)] For \(l = 1, \dots, N\):
        \begin{equation}\label{v:update}
            W_i^{k,l} = \frac{L_{U_i^{k+1}} / p \cdot W_i^{k,l-1} + \beta V^k - \nabla_V f_i(U_i^{k+1}, W_i^{k,l-1})/p - Y_i^k}{L_{U_i^{k+1}} / p + \beta},
        \end{equation}
        where \(W_i^{k,0} = W_i^k\).
        \item[(ii)] Set \(W_i^{k+1} = W_i^{k,N}\).
    \end{enumerate}

    \item Set \(Y_i^{k+1} = Y_i^k + \beta (W_i^{k+1} - V^k)\).
\end{enumerate}
\textbf{[Communication]} Each client \(i \in S_k\) sends \((W_i^{k+1}, Y_i^{k+1})\) back to the server. 

\textbf{[Server update]} The server updates \(V^{k+1}\) as a solution to the problem
\begin{equation}\label{s:update}
    \min_{V \in \mathcal{V}}\quad  \sum_{i=1}^p \left[\langle Y_i^{k+1}, W_i^{k+1} - V \rangle + \frac{\beta}{2} \|W_i^{k+1} - V\|^2 \right] + R(V)
\end{equation}
with $(W_i^{k+1},Y_i^{k+1})=(W_i^{k},Y_i^{k})$ for $i\notin S_k$.

\textbf{Set \(k \gets k + 1\)}. 
}
\end{algorithm}

\subsection{Convergence Analysis}

Note that Problem \eqref{model} is equivalent to the following unconstrained optimization problem
\begin{equation}\label{unconstrained}
\min_{\substack{U_i \in \mathbb{R}^{m_i \times r}, V \in \mathbb{R}^{r \times n}}} \Phi(\mbfU, V) := \frac{1}{p} \sum_{i=1}^p \big[F_i(M_i, U_iV) + \mathcal{I}_{\mathcal{U}_i}(U_i)\big] + R(V) + \mathcal{I}_{\mathcal{V}}(V),
\end{equation}
where \(\mathcal{I}_{\mathcal{U}_i}(\cdot)\),  \(i \in [p]\) is the indicator function of \(\mathcal{U}_i\). As the federated matrix completion problem in \eqref{model} is usually nonconvex, it is meaningful to find  the critical/stationary solutions \((\mbfU^*, V^*)\) satisfying the optimality condition \(0 \in \partial \Phi(\mbfU^*, V^*)\) instead of the global minimizer. In this section, we  focus on the convergence analysis and complexity of Algorithm~\ref{FedLMFmm}. To establish theoretical guarantees, the following standard assumptions are adopted for Problem~\eqref{model}.

\begin{assumption}\label{basic:assumption}
The following conditions hold
\begin{enumerate}[label={\rm(\alph*)}] 
    \item The functions \( R_i \) and \( R \) are l.s.c. convex. The sets \(\operatorname{dom}\, R_i \cap \mathcal{U}_i\) and \(\operatorname{dom}\, R \cap \mathcal{V}\) are non-empty. Moreover,  \(\upsilon = \inf\limits_{\mbfU \in \mathcal{U}, V \in \mathcal{V}} \sum_{i=1}^p F_i(U_i,V) > -\infty\), and \(\alpha = \inf\limits_{\mathbf{U} \in \mathcal{U}, W_i\in\mathbb R^{r\times n}, V \in \mathcal{V}, Y\in \mathbb R^{r\times n}} \mathcal{L}(\mathbf{U}, \mbfW, V, \mathbf{Y}) > -\infty\).
    
    \item For any fixed $V$, the function $f_i(\cdot, V)$ is $L_{V}$-smooth on $\mathcal{U}_i$, i.e., it is continuously differentiable, and there exists a positive constant $L_{V}$ such that
    \begin{equation}\label{average-smoothness}
        \|\nabla_U f_i(U, V) - \nabla_U f_i(X, V)\| \leq L_{V} \|U - X\|, \quad \forall U, X \in \mathcal{U}_i.
    \end{equation}
    Likewise, for any fixed $U_i$, the function $f_i(U_i, \cdot)$ is $L_{U_i}$-smooth on $\mathcal{V}$.

    \item There exists a positive constant \(L\) such that, for any fixed \(V\), the mapping \(\nabla_V f_i(\cdot, V)\) is \(L\)-Lipschitz on \(\mathcal{U}_i\) in the sense that 
    \[
    \|\nabla_V f_i(U, V) - \nabla_V f_i(X, V)\| \leq L \|U - X\|\quad \forall \; U,X\in \mathcal{U}_i, V\in \mathcal{V}.
    \] 
\end{enumerate}
\end{assumption}

To have a uniform sampling scheme of $S_k$ at every communication round, we also need to assume that each client $i>0$ is sampled with probability at least $p_i>0$. This assumption is broad and accommodates various sampling schemes, including non-overlapping uniform sampling and doubly uniform sampling, as discussed in \cite{richtarik2016parallel}.

\begin{assumption}\label{assump:sampling}
There exist probabilities \(p_1, p_2, \ldots, p_p > 0\) such that, at every communication round and independently of the past, each client \(i \in [p]\) is sampled with probability \(p_i\).
\end{assumption}

The following lemma is useful in our analysis.

\begin{lemma}\label{conditionalexpectation}
    Let \(\Theta_i^k\) for \(i \in [p]\) be random variables that are
    measurable with respect to the history
    up to round \(k\). Suppose Assumption~\ref{assump:sampling} holds.
    Then, for any \(k\), we have
    \[
        \E_k\!\biggl[\sum_{i \in S_k} \Theta_i^k\biggr]
        \;=\;
        \sum_{i=1}^p p_i\,\Theta_i^k,
    \]
    where \(\mathbb{E}_k[\cdot]\) denotes the conditional expectation
    given the history up to the sampling at round \(k\)
\end{lemma}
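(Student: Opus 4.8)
The plan is to compute the conditional expectation by writing the random sum $\sum_{i \in S_k} \Theta_i^k$ as a sum over all clients weighted by indicator variables, and then exploit the measurability of the $\Theta_i^k$ together with Assumption~\ref{assump:sampling}. Concretely, for each $i \in [p]$ let $\mathbf{1}_{\{i \in S_k\}}$ denote the indicator random variable that equals $1$ when client $i$ is sampled at round $k$ and $0$ otherwise. Then by definition of $S_k$ we have the identity
\[
\sum_{i \in S_k} \Theta_i^k = \sum_{i=1}^p \mathbf{1}_{\{i \in S_k\}} \, \Theta_i^k,
\]
which holds pointwise on $\Omega$.

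Next I would apply the conditional expectation $\E_k[\cdot]$ to both sides and use linearity:
\[
\E_k\!\biggl[\sum_{i \in S_k}\Theta_i^k\biggr] = \sum_{i=1}^p \E_k\bigl[\mathbf{1}_{\{i \in S_k\}}\,\Theta_i^k\bigr].
\]
Here I use that each $\Theta_i^k$ is measurable with respect to the history up to round $k$ (i.e., $\mathcal{F}_k$-measurable), so it can be pulled out of the conditional expectation, leaving $\E_k[\mathbf{1}_{\{i \in S_k\}}\,\Theta_i^k] = \Theta_i^k \, \E_k[\mathbf{1}_{\{i \in S_k\}}]$. Finally, Assumption~\ref{assump:sampling} states that, independently of the past, client $i$ is sampled with probability $p_i$, so $\E_k[\mathbf{1}_{\{i \in S_k\}}] = P(i \in S_k \mid \mathcal{F}_k) = p_i$. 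Substituting gives $\sum_{i=1}^p p_i\, \Theta_i^k$, which is the claimed identity.

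The argument is essentially routine; the only point that requires a little care is making the "independently of the past" clause precise in the filtration language set up before Lemma~\ref{supermartingale}, so that the sampling at round $k$ is independent of $\mathcal{F}_k$ and hence $P(i \in S_k \mid \mathcal{F}_k)$ is the deterministic constant $p_i$ rather than an $\mathcal{F}_k$-measurable random variable. Once that is granted, the pull-out property of conditional expectation for $\mathcal{F}_k$-measurable factors and linearity finish the proof immediately, so I do not anticipate any genuine obstacle here.
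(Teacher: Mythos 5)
Your proposal is correct and follows essentially the same route as the paper: the paper expands $\E_k[\sum_{i\in S_k}\Theta_i^k]$ by summing over all possible subsets $S$ and swapping the order of summation, which is exactly your indicator decomposition $\sum_{i=1}^p \mathbf{1}_{\{i\in S_k\}}\Theta_i^k$ written out explicitly, and both arguments then reduce to the marginal inclusion probability $P(i\in S_k\mid\mathcal{F}_k)=p_i$ from Assumption~\ref{assump:sampling}. Your remark about making the ``independently of the past'' clause precise is well taken, since the paper likewise invokes it to assert that this conditional probability is the deterministic constant $p_i$.
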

\begin{proof}
Since \(\Theta_i^k\) is \(\mathcal{F}_k\)-measurable for each \(i\in[p]\),
we can view \(\sum_{i \in S_k} \Theta_i^k\) as a random quantity where the only remaining randomness is due to the sampling of \(S_k\).
Hence, we have
\[
\begin{aligned}
    \E_k\biggl[\sum_{i \in S_k} \Theta_i^k\biggr]
    &= \sum_{S \subseteq [p]} \mathbb P\bigl(S_k = S \mid \mathcal{F}_k\bigr)
       \sum_{i \in S} \Theta_i^k \\
    &= \sum_{i=1}^p \sum_{S \ni i}
       \mathbb P\bigl(S_k = S \mid \mathcal{F}_k\bigr)\,\Theta_i^k.
\end{aligned}
\]
By Assumption~\ref{assump:sampling}, the probability that \(i\) is included
in \(S_k\) (independently of \(\mathcal{F}_k\)) is \(p_i\), namely
\[
\sum_{S \ni i} \mathbb P\bigl(S_k = S \mid \mathcal{F}_k\bigr) \;=\; p_i.
\]
Substituting back into the previous expression yields
\[
\E_k\biggl[\sum_{i \in S_k} \Theta_i^k\biggr]
\;=\;
\sum_{i=1}^p p_i\,\Theta_i^k,
\]
which completes the proof.
\end{proof}

Before proceeding to the convergence analysis, we initialize \(W_i^{-1, l} = W_i^0\) for any \(i \in [p]\) and \(l \in [N]\). For any client \(i \notin S_k\), we set \(W_i^{k, l} := W_i^{k-1, l}\) and \(U_i^{k, l} = U_i^{k, 0} := U_i^k\) for all \(l \in [N]\) and \(k \geq 0\). To facilitate the analysis, let us introduce \(\bar{U}_i^{k, l}\), a virtual variable defined as follows
\begin{equation}\label{1122g}
    \bar{U}_i^{k, l} = \argmin_{U_i \in \mathcal{U}_i}\;\; \langle \nabla_U f_i(U_i^{k, l-1}, W_i^k), U_i \rangle + \frac{L_{W_i^k}}{2} \|U_i - U_i^{k, l-1}\|^2 + R_i(U_i).
\end{equation}
This is exactly the same update \eqref{125a} when $i\in S_k$, i.e., $\bar{U}_i^{k, l}={U}_i^{k, l}$ for $l\in[N]$ in this case. We do not use $\bar{U}_i^{k, l}$ when $i\notin S_k$ in the $k$-th round of Algorithm~\ref{FedLMFmm}, but employ them for convergence analysis.

The following useful lemma establishes the relationship between $U_i^k$, $W_i^k$, and $Y_i^k$.

\begin{lemma}\label{lemmay}
Let \(\{U_i^k, W_i^k, Y_i^k\}\) be generated by Algorithm~\ref{FedLMFmm}. Then, for all \(i \in [p]\) and \(k \geq 0\), we have
\begin{equation}\label{lemya}
    Y_i^k = - \frac{1}{p}\left[\nabla_V f_i(U_i^k, W_i^{k-1,N-1}) + L_{U_i^k}(W_i^k - W_i^{k-1,N-1})\right].
\end{equation}
\end{lemma}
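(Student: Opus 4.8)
The plan is to prove \eqref{lemya} by induction on $k$, using the definition of $Y_i^{k+1}$ in \eqref{eq:Y_i} together with the closed-form $W_i$-update \eqref{eq:W_i}, and to handle separately the two cases $i\in S_k$ (where the variables actually move) and $i\notin S_k$ (where they are frozen according to the convention $W_i^{k,l}:=W_i^{k-1,l}$, $U_i^{k,l}:=U_i^k$).

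For the base case $k=0$, the initialization in Algorithm~\ref{FedLMFmm} sets $Y_i^0=-\frac1p\nabla_V f_i(U_i^0,W_i^0)$, and since $W_i^{-1,l}=W_i^0$ for all $l$ (in particular $W_i^{-1,N-1}=W_i^0=W_i^0$), the term $L_{U_i^0}(W_i^0-W_i^{-1,N-1})$ vanishes, so \eqref{lemya} holds at $k=0$. For the inductive step, suppose \eqref{lemya} holds at index $k$. If $i\notin S_k$ then $Y_i^{k+1}=Y_i^k$, $U_i^{k+1}=U_i^k$, and by convention $W_i^{k,l}=W_i^{k-1,l}$ for all $l$, so in particular $W_i^{k+1}=W_i^{k,N}=W_i^{k-1,N}=W_i^k$ and $W_i^{k,N-1}=W_i^{k-1,N-1}$; substituting these identities into the inductive hypothesis immediately gives \eqref{lemya} at index $k+1$. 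The substantive case is $i\in S_k$.

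For $i\in S_k$, the key is to combine \eqref{eq:Y_i}, namely $Y_i^{k+1}=Y_i^k+\beta(W_i^{k+1}-V^k)$, with the last proximal-gradient step $l=N$ in \eqref{eq:W_i}. Writing $a:=L_{U_i^{k+1}}/p$ and rearranging \eqref{eq:W_i} at $l=N$ gives
\begin{equation*}
(a+\beta)W_i^{k,N}=a\,W_i^{k,N-1}+\beta V^k-\tfrac1p\nabla_V f_i(U_i^{k+1},W_i^{k,N-1})-Y_i^k,
\end{equation*}
hence
\begin{equation*}
\beta(W_i^{k,N}-V^k)= -Y_i^k-\tfrac1p\nabla_V f_i(U_i^{k+1},W_i^{k,N-1})-a\,(W_i^{k,N}-W_i^{k,N-1}).
\end{equation*}
Adding $Y_i^k$ to both sides and using $W_i^{k+1}=W_i^{k,N}$ yields exactly
\begin{equation*}
Y_i^{k+1}=Y_i^k+\beta(W_i^{k+1}-V^k)= -\tfrac1p\Big[\nabla_V f_i(U_i^{k+1},W_i^{k,N-1})+L_{U_i^{k+1}}\big(W_i^{k+1}-W_i^{k,N-1}\big)\Big],
\end{equation*}
which is precisely \eqref{lemya} with $k$ replaced by $k+1$, since $U_i^{k+1}$ plays the role of $U_i^{k+1}$ and $W_i^{k,N-1}$ is the ``$W_i^{(k+1)-1,N-1}$'' term in that formula. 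Notice that the inductive hypothesis is in fact not needed in this case — the identity for $i\in S_k$ follows directly from one line of algebra on the update rules; induction is only genuinely used to propagate the formula through rounds in which client $i$ rests.

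The main (and really the only) subtlety is bookkeeping with the indexing conventions introduced just before the lemma: one must be careful that when $i\notin S_k$ the frozen values $W_i^{k,l}:=W_i^{k-1,l}$ are consistent with $W_i^{k+1}=W_i^k$ and $W_i^{k,N-1}=W_i^{k-1,N-1}$, and that the base-case initialization $Y_i^0=-\frac1p\nabla_V f_i(U_i^0,W_i^0)$ together with $W_i^{-1,N-1}=W_i^0$ correctly matches \eqref{lemya} at $k=0$. Once these conventions are tracked carefully, the proof is a short two-case induction with the $i\in S_k$ case being a direct rearrangement of \eqref{eq:W_i} and \eqref{eq:Y_i}; no estimates, smoothness, or probabilistic arguments are required here.
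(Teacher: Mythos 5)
Your proposal is correct and follows essentially the same argument as the paper: induction on $k$, with the base case from the initialization $Y_i^0=-\frac1p\nabla_V f_i(U_i^0,W_i^0)$ and $W_i^{-1,l}=W_i^0$, the $i\in S_k$ case obtained by rearranging the first-order identity behind the $W_i^{k,N}$ update and combining with $Y_i^{k+1}=Y_i^k+\beta(W_i^{k+1}-V^k)$, and the $i\notin S_k$ case handled by the frozen-variable conventions plus the inductive hypothesis. Your observation that the inductive hypothesis is only genuinely needed in the resting-client case is accurate and consistent with the paper's proof.
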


\begin{proof}
We prove \eqref{lemya} by induction. For the base case \(k = 0\), \eqref{lemya} follows directly from the initialization of Algorithm~\ref{FedLMFmm} with \(Y_i^0 = -\frac{1}{p}\nabla_V f_i(U_i^0, W_i^0)\) and \(W_i^{-1,l} = W_i^0\) for all \(i \in [p]\) and \(l \in [N]\).

Let us assume that \eqref{lemya} holds for some \(k \geq 0\). We need to show that \eqref{lemya} is also satisfied for \(k+1\) in the following two cases:

\noindent {\bf Case 1:} \(i \in S_k\). The update rule for \(W_i^{k,l}\) in \eqref{v:update} implies
\begin{equation*}
    \frac{1}{p}\nabla_V f_i(U_i^{k+1}, W_i^{k,l-1}) + Y_i^k + \frac{1}{p}L_{U_i^{k+1}}(W_i^{k,l} - W_i^{k,l-1}) + \beta(W_i^{k,l} - V^k) = 0.
\end{equation*}
By choosing \(l = N\), this together with  the definition of \(Y_i^{k+1}\) gives us that
\begin{equation*}
\begin{aligned}
    Y_i^{k+1} &= Y_i^k+\beta(W_i^{k,N}-V^k)\\
   &= - \frac{1}{p}\left[\nabla_V f_i(U_i^{k+1}, W_i^{k,N-1}) + L_{U_i^{k+1}}(W_i^{k+1} - W_i^{k,N-1})\right].
\end{aligned}
\end{equation*}

\noindent{\bf Case 2:} \(i \notin S_k\). We have \(Y_i^{k+1} = Y_i^k\), \((U_i^{k+1},W_i^{k+1}) = (U_i^k,W_i^k)\), and \(W_i^{k,N-1} = W_i^{k-1,N-1}\). The inductive hypothesis \eqref{lemya} yields that 
\begin{align*}
    Y_i^{k+1} &= - \frac{1}{p}\left[\nabla_V f_i(U_i^k, W_i^{k-1,N-1}) + L_{U_i^k}(W_i^k - W_i^{k-1,N-1})\right] \\
    & = - \frac{1}{p}\left[\nabla_V f_i(U_i^{k+1}, W_i^{k,N-1}) + L_{U_i^{k+1}}(W_i^{k+1} - W_i^{k,N-1})\right]. 
\end{align*}
In both cases, \eqref{lemya} holds for \(k+1\). This completes the proof.
\end{proof}

The next lemma plays an important role in our analysis. 

\begin{lemma}\label{lemmaL}
Let $\{\mathbf{U}^k, \mbfW^k, V^k, \mathbf{Y}^k\}$ be a sequence generated by Algorithm~\ref{FedLMFmm}. Then, for any \(k\), we have
\begin{equation}\label{ineL}
    \begin{split}
        & \mathcal{L}^{k+1} + \frac{p\beta}{2} \|V^{k+1} - V^k\|^2 + \sum_{i=1}^p\sum_{l=1}^N \biggl[\biggl(\frac{L_{W_i^k}}{2p}-\frac{4NL^2}{p^2\beta}\biggr)\|U_i^{k,l} - U_i^{k,l-1}\|^2\\
        & \quad + \biggl(\frac{L_{U_i^{k+1}}}{2p} - \frac{16L_{U_i^{k+1}}^2+4NL_{U_i^k}^2}{p^2\beta}\biggr)\|W_i^{k,l} - W_i^{k,l-1}\|^2\biggr] \\
        & \leq \mathcal{L}^k + \sum_{i=1}^p \sum_{l=1}^N\frac{16L_{U_i^k}^2}{p^2\beta}\|W_i^{k-1,l} - W_i^{k-1,l-1}\|^2 + \sum_{i \notin S_k}\sum_{l=1}^N \frac{L_{U_i^k} }{2p}\|W_i^{k-1,l} - W_i^{k-1,l-1}\|^2,
    \end{split}
\end{equation}
where \(\mathcal{L}^k := \mathcal{L}(\mathbf{U}^k, \mbfW^k, V^k, \mathbf{Y}^k)\).
\end{lemma}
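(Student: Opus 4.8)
\textbf{Proof plan for Lemma \ref{lemmaL}.}

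The strategy is to track how $\mathcal{L}^k$ changes across one communication round by decomposing the change into the three block-updates ($\mathbf{U}$, then $\mathbf{W}$, then $\mathbf{Y}$) and finally the server update of $V$, and to bound each piece using the descent properties of proximal gradient steps together with Assumption \ref{basic:assumption}. First I would analyze the $U_i$-updates: for each $i \in S_k$ and each inner step $l$, \eqref{u:update} minimizes a $L_{W_i^k}$-strongly convex surrogate, so Lemma \ref{convexity-smoothness}(c) (with $\alpha$ coming from convexity of $R_i$ and $\rho = L_{W_i^k}$, after adding the linearization error controlled by the $L_{W_i^k}$-smoothness of $f_i(\cdot,W_i^k)$ via Lemma \ref{convexity-smoothness}(d)) gives a sufficient decrease of $\tfrac{1}{p}[f_i(\cdot,W_i^k)+R_i]$ of order $\tfrac{L_{W_i^k}}{2p}\|U_i^{k,l}-U_i^{k,l-1}\|^2$ per step; summing over $l$ telescopes and produces the $U$-difference term on the left. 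The $W_i$-updates are handled analogously using the closed form \eqref{v:update}: each step decreases $\mathcal{L}_i(U_i^{k+1},\cdot,V^k,Y_i^k)$ by roughly $\tfrac{L_{U_i^{k+1}}/p}{2}\|W_i^{k,l}-W_i^{k,l-1}\|^2$, again by Lemma \ref{convexity-smoothness}(c)–(d) applied to the $L_{U_i^{k+1}}$-smooth function $f_i(U_i^{k+1},\cdot)$ plus the $\tfrac{\beta}{2}\|\cdot - V^k\|^2$ term.

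The delicate part is the dual update $Y_i^{k+1}=Y_i^k+\beta(W_i^{k+1}-V^k)$, which \emph{increases} the Lagrangian through the $\langle Y_i,W_i-V\rangle$ term: the increase equals $\langle Y_i^{k+1}-Y_i^k, W_i^{k+1}-V^k\rangle = \tfrac1\beta\|Y_i^{k+1}-Y_i^k\|^2$. Here Lemma \ref{lemmay} is essential — it expresses $Y_i^{k+1}-Y_i^k$ (and also $Y_i^k$) in terms of $\nabla_V f_i$ evaluated at consecutive iterates and the terms $L_{U_i}(W_i-W_i^{\cdot,N-1})$, so that $\|Y_i^{k+1}-Y_i^k\|^2$ can be bounded by a constant times $\|\nabla_V f_i(U_i^{k+1},\cdot)-\nabla_V f_i(U_i^k,\cdot)\|^2$ plus terms in $\|W_i^{k+1}-W_i^{k+1,N-1}\|^2$ and $\|W_i^k-W_i^{k-1,N-1}\|^2$. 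Using Assumption \ref{basic:assumption}(c) (the $L$-Lipschitz property of $U\mapsto \nabla_V f_i(U,V)$) turns the gradient-difference term into $L^2\|U_i^{k+1}-U_i^k\|^2 \le L^2 N\sum_{l}\|U_i^{k,l}-U_i^{k,l-1}\|^2$ by Cauchy–Schwarz on the $N$ inner steps, and bounding $\|W_i^{k+1}-W_i^{k+1,N-1}\|^2$ etc. by single-step differences $\sum_l\|W_i^{k,l}-W_i^{k,l-1}\|^2$ produces exactly the negative coefficients $-\tfrac{4NL^2}{p^2\beta}$ on the $U$-terms and $-\tfrac{16L_{U_i^{k+1}}^2}{p^2\beta}$ on the $W$-terms, as well as the lagged term $\sum_l \tfrac{16L_{U_i^k}^2}{p^2\beta}\|W_i^{k-1,l}-W_i^{k-1,l-1}\|^2$ on the right (this is the price of comparing $Y_i^k$, which references round $k-1$ data, with the current round). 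The constants $16$ and $4$ come from the repeated use of $\|a+b\|^2\le 2\|a\|^2+2\|b\|^2$ in splitting the expression from Lemma \ref{lemmay}.

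Finally, the $V$-update: since $V^{k+1}$ minimizes a $p\beta$-strongly convex function (the sum $\sum_i[\langle Y_i^{k+1},W_i^{k+1}-V\rangle+\tfrac\beta2\|W_i^{k+1}-V\|^2]+R(V)$ has curvature $p\beta$ from the quadratic terms plus convexity of $R$), Lemma \ref{convexity-smoothness}(c) with $\rho=0$, $\alpha=p\beta$ gives the descent $\tfrac{p\beta}{2}\|V^{k+1}-V^k\|^2$, which is the remaining term on the left. For clients $i\notin S_k$ nothing changes except that their contribution to $\mathcal{L}$ still carries their (stale) $Y_i^k$, so we must account for the gap between $\mathcal{L}_i$ before and after re-indexing; this is where the term $\sum_{i\notin S_k}\sum_l \tfrac{L_{U_i^k}}{2p}\|W_i^{k-1,l}-W_i^{k-1,l-1}\|^2$ on the right-hand side comes from — a one-sided bound that does not require $i\in S_k$. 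Assembling all four contributions and the convention $W_i^{k,l}=W_i^{k-1,l}$, $U_i^{k,l}=U_i^k$ for $i\notin S_k$ yields \eqref{ineL}. The main obstacle is the dual-step bookkeeping in the middle paragraph: getting the exact constants requires carefully choosing how to split $Y_i^{k+1}-Y_i^k$ via Lemma \ref{lemmay} and deciding which differences to expand versus keep, and it is here that the somewhat unusual coefficients in the statement are pinned down.
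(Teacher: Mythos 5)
Your plan follows essentially the same route as the paper's proof: the same decomposition into the $U$-block descent via Lemma~\ref{convexity-smoothness}(c)--(d), the $W$-block descent from the first-order condition of \eqref{v:update}, the dual-ascent term $\tfrac1\beta\|Y_i^{k+1}-Y_i^k\|^2$ controlled through Lemma~\ref{lemmay}, Assumption~\ref{basic:assumption}(c), and Cauchy--Schwarz over the $N$ inner steps (which is exactly where the paper's constants $16$, $4N$ and the lagged $\|W_i^{k-1,l}-W_i^{k-1,l-1}\|^2$ term arise), and finally the $p\beta$-strong convexity of the server subproblem for the $\tfrac{p\beta}{2}\|V^{k+1}-V^k\|^2$ term. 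Your accounting for $i\notin S_k$ also matches the paper: the indexing convention $W_i^{k,l}=W_i^{k-1,l}$ places non-descent terms on the left that must be compensated by $\sum_{i\notin S_k}\sum_l\tfrac{L_{U_i^k}}{2p}\|W_i^{k-1,l}-W_i^{k-1,l-1}\|^2$ on the right.
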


\begin{proof}
Let us fix \(k \in \mathbb{N}\). By using the update rule for \(U_i^{k,l}\) in \eqref{u:update}, for any \(i \in S_k\) and \(l \in [N]\), we obtain from Lemma~\ref{convexity-smoothness}(c) that
\begin{equation}\label{1216a}
    \langle \nabla_U f_i(U_i^{k,l-1}, W_i^k),  U_i^{k,l} \rangle +R_i(U_i^{k,l}) + L_{W_i^k} \|U_i^{k,l} - U_i^{k,l-1}\|^2 \leq \langle \nabla_U f_i(U_i^{k,l-1}, W_i^k), U_i^{k,l-1} \rangle + R_i(U_i^{k,l-1}).
\end{equation}
By Lemma~\ref{convexity-smoothness}(d), the \(L_{W_i^k}\)-smoothness of \(f_i(\cdot, W_i^k)\) gives us that
\begin{equation}\label{1216b}
   f_i(U_i^{k,l}, W_i^k) \leq  f_i(U_i^{k,l-1}, W_i^k) + \langle \nabla_U f_i(U_i^{k,l-1}, W_i^k),  U_i^{k,l}-U_i^{k,l-1}\rangle  + \frac{L_{W_i^k}}{2}\|U_i^{k,l} - U_i^{k,l-1}\|^2.
\end{equation}
By adding \eqref{1216a} to \eqref{1216b}, we arrive at
\begin{equation*}
  f_i(U_i^{k,l}, W_i^k)+R_i(U_i^{k,l}) + \frac{L_{W_i^k}}{2} \|U_i^{k,l} - U_i^{k,l-1}\|^2\leq   f_i(U_i^{k,l-1}, W_i^k) + R_i(U_i^{k,l-1})
\end{equation*}
Telescoping this inequality over \(l = 1, \dots, N\) leads us to
\begin{equation*}
  f_i(U_i^{k+1}, W_i^k)+R_i(U_i^{k+1}) + \sum_{l=1}^N\frac{L_{W_i^k}}{2} \|U_i^{k,l} - U_i^{k,l-1}\|^2\leq   f_i(U_i^k, W_i^k) + R_i(U_i^k).
\end{equation*}
By combining this with \eqref{def:AL}, we have
\begin{equation}\label{1216d}
    \mathcal{L}_i(U_i^{k+1}, W_i^k, V^k, Y_i^k) + \sum_{l=1}^N \frac{L_{W_i^k}}{2p} \|U_i^{k,l} - U_i^{k,l-1}\|^2 \leq \mathcal{L}_i(U_i^k, W_i^k, V^k, Y_i^k).
\end{equation}
Following the update rule for \(W_i^{k,l}\) in \eqref{v:update} gives us that
\[
Y_i^k+L_{U_i^{k+1}}/p(W_i^{k,l}-W_i^{k,l-1})+\beta(W_i^{k,l}-V^k)=-\nabla_Vf_i(U_i^{k+1}, W_i^{k,l-1})/p,
\]
which implies that 
\begin{equation}\label{eq:YWf}
\langle Y_i^k+L_{U_i^{k+1}}/p(W_i^{k,l}-W_i^{k,l-1})+\beta(W_i^{k,l}-V^k),W_i^{k,l}-W_i^{k,l-1}\rangle=\langle \nabla_Vf_i(U_i^{k+1}, W_i^{k,l-1})/p,W_i^{k,l-1}-W_i^{k,l}\rangle.
\end{equation}
Note further  that 
\[
\frac{\beta}{2}\|W_i^{k,l}-V^k\|^2\le \beta\langle W_i^{k,l}-V^k,W_i^{k,l}-W_i^{k,l-1}\rangle+\frac{\beta}{2}\|W_i^{k,l-1}-V^k\|^2.
\]
Combining this with \eqref{eq:YWf}  deduces that 
\begin{equation}\label{1216e}
    \begin{split}
        &\langle Y_i^k, W_i^{k,l} - V^k \rangle  + L_{U_i^{k+1}} / p \|W_i^{k,l} - W_i^{k,l-1}\|^2+ \frac{\beta}{2} \|W_i^{k,l} - V^k\|^2 \\
        \le &\langle Y_i^k, W_i^{k,l} - V^k \rangle  + L_{U_i^{k+1}} / p \|W_i^{k,l} - W_i^{k,l-1}\|^2+ \beta\langle W_i^{k,l}-V^k,W_i^{k,l}-W_i^{k,l-1}\rangle+\frac{\beta}{2}\|W_i^{k,l-1}-V^k\|^2\\
        =&\langle \nabla_V f_i(U_i^{k+1}, W_i^{k,l-1}) / p, W_i^{k,l-1} - W_i^{k,l} \rangle + \langle Y_i^k, W_i^{k,l-1} - V^k \rangle + \frac{\beta}{2} \|W_i^{k,l-1} - V^k\|^2.
    \end{split}
\end{equation}
Due to the \(L_{U_i^{k+1}}\)-smoothness of \(f_i(U_i^{k+1}, \cdot)\) and Lemma~\ref{convexity-smoothness}(d), we have
\[
\langle \nabla_V f_i(U_i^{k+1}, W_i^{k,l-1}), W_i^{k,l-1} - W_i^{k,l} \rangle \le f(U_i^{k+1},W_i^{k,l-1})-f(U_i^{k+1},W_i^{k,l})+\frac{L_{U_i^{k+1}}}{2}\|W_i^{k,l-1} - W_i^{k,l}\|^2.
\]
This together with \eqref{1216e} implies that
\begin{equation*}\label{1216f}
    \mathcal{L}_i(U_i^{k+1}, W_i^{k,l}, V^k, Y_i^k) + \frac{L_{U_i^{k+1}}}{2p} \|W_i^{k,l} - W_i^{k,l-1}\|^2 \leq \mathcal{L}_i(U_i^{k+1}, W_i^{k,l-1}, V^k, Y_i^k).
\end{equation*}
Telescoping this inequality over \(l = 1, \dots, N\) leads us to
\begin{equation}\label{1216g}
    \mathcal{L}_i(U_i^{k+1}, W_i^{k+1}, V^k, Y_i^k) + \sum_{l=1}^N \frac{L_{U_i^{k+1}}}{2p} \|W_i^{k,l} - W_i^{k,l-1}\|^2 \leq \mathcal{L}_i(U_i^{k+1}, W_i^k, V^k, Y_i^k).
\end{equation}
By summing up \eqref{1216d} and \eqref{1216g}, we obtain
\begin{equation*}
    \mathcal{L}_i(U_i^{k+1}, W_i^{k+1}, V^k, Y_i^k) + \sum_{l=1}^N \left[\frac{L_{W_i^k}}{2p}\|U_i^{k,l} - U_i^{k,l-1}\|^2+ \frac{L_{U_i^{k+1}} }{2p}\|W_i^{k,l} - W_i^{k,l-1}\|^2\right] \leq \mathcal{L}_i(U_i^k, W_i^k, V^k, Y_i^k).
\end{equation*}
Recall that for any \(i \notin S_k\), \(W_i^{k,l} = W_i^{k-1,l}\) for all \(l=0,1,\ldots, N\), \(W_i^{k+1} = W_i^k\), and \(U_i^{k,l} = U_i^{k,l-1}\) for all \(l \in [N]\). It follows from the above inequality and \eqref{def:AugL} that 
\begin{equation}\label{1216j}
    \begin{split}
        & \mathcal{L}(\mbfU^{k+1}, \mbfW^{k+1}, V^k, \mbfY^k) + \sum_{i=1}^p\sum_{l=1}^N \left[\frac{L_{W_i^k}}{2p}\|U_i^{k,l} - U_i^{k,l-1}\|^2+ \frac{L_{U_i^{k+1}}}{2p}\|W_i^{k,l} - W_i^{k,l-1}\|^2\right] \\
        & \leq \mathcal{L}^k + \sum_{i \notin S_k}\sum_{l=1}^N \frac{L_{U_i^k}}{2p}\|W_i^{k-1,l} - W_i^{k-1,l-1}\|^2.
    \end{split}
\end{equation}
On the other hand, from Lemma \ref{lemmay}, for any \(i \in [p]\), we have
\[
\begin{split}
     Y_i^{k+1} - Y_i^k
     & = \frac{1}{p} \big[\nabla_V f_i(U_i^k, W_i^{k-1,N-1}) - \nabla_V f_i(U_i^{k+1}, W_i^{k,N-1}) \\
     & \quad + L_{U_i^k}(W_i^k - W_i^{k-1,N-1}) - L_{U_i^{k+1}}(W_i^{k+1} - W_i^{k,N-1})\big]\\
     & \le \frac{1}{p} \big[\nabla_V f_i(U_i^k, W_i^{k-1,N-1}) - \nabla_V f_i(U_i^k, W_i^k) \\
     &\quad +  \nabla_V f_i(U_i^{k+1}, W_i^{k+1}) - \nabla_V f_i(U_i^{k+1}, W_i^{k,N-1})\\
     &\quad + \nabla_V f_i(U_i^k, W_i^k) - \nabla_V f_i(U_i^k, W_i^{k+1}) \\
     &\quad + \nabla_V f_i(U_i^k, W_i^{k+1}) - \nabla_V f_i(U_i^{k+1}, W_i^{k+1})\\
     & \quad + L_{U_i^k}\|W_i^k - W_i^{k-1,N-1}\|+  L_{U_i^{k+1}}\|W_i^{k+1} - W_i^{k,N-1}\|\big].
    \end{split}
\]
Due to the $L$-smoothness of $\nabla_V f_i$ in Assumption~\ref{basic:assumption}(c), the above inequality implies that 
\begin{equation}\label{1216k}
    \begin{split}
        p\|Y_i^{k+1} - Y_i^k\| \leq  &\ 2L_{U_i^k}\|W_i^k - W_i^{k-1,N-1}\| + 2L_{U_i^{k+1}}\|W_i^{k+1} - W_i^{k,N-1}\| + L_{U_i^k}\|W_i^{k+1} - W_i^k\| \\
        & + L\|U_i^{k+1} - U_i^k\|.
    \end{split}
\end{equation}
 By squaring both sides and applying the Cauchy–Schwarz inequality, we derive
\begin{equation}\label{1216l}
\begin{aligned}
    p^2\|Y_i^{k+1} - Y_i^k\|^2 \leq &\ 16L_{U_i^k}^2\|W_i^k - W_i^{k-1,N-1}\|^2 + 16L_{U_i^{k+1}}^2\|W_i^{k+1} - W_i^{k,N-1}\|^2 \\ &+ 4L_{U_i^k}^2\|W_i^{k+1} - W_i^k\|^2 + 4L^2\|U_i^{k+1} - U_i^k\|^2\\
        \leq &\ \sum_{l=1}^N\biggl[16L_{U_i^k}^2\|W_i^{k-1,l} - W_i^{k-1,l-1}\|^2 + (16L_{U_i^{k+1}}^2+4NL_{U_i^k}^2)\|W_i^{k,l} - W_i^{k,l-1}\|^2  \\
        & + 4NL^2\|U_i^{k,l} - U_i^{k,l-1}\|^2\biggr].
\end{aligned}
\end{equation}
From the definition of \(\mathcal{L}_i\) in \eqref{def:AL}, for any \(i \in [p]\), note that
\begin{equation*}
    \mathcal{L}_i(U_i^{k+1}, W_i^{k+1}, V^k, Y_i^{k+1}) = \mathcal{L}_i(U_i^{k+1}, W_i^{k+1}, V^k, Y_i^k) + \langle Y_i^{k+1} - Y_i^k, W_i^{k+1} - V^k \rangle.
\end{equation*}
Recall that \(Y_i^{k+1} - Y_i^k = \beta(W_i^{k+1} - V^k)\) if \(i\in S_k\) and \(Y_i^{k+1} - Y_i^k = 0\) if \(i\notin S_k\). For any \(i \in [p]\), it follows from that
\begin{equation}\label{1216n}
     \mathcal{L}(\mathbf{U}^{k+1}, \mbfW^{k+1}, V^k, \mathbf{Y}^{k+1}) = \mathcal{L}(\mathbf{U}^{k+1}, \mbfW^{k+1}, V^k, \mathbf{Y}^k) + \sum_{i=1}^p \frac{\|Y_i^{k+1} - Y_i^k\|^2}{\beta}.
\end{equation}
Combining \eqref{1216l} and \eqref{1216n} gives us that
\begin{equation}\label{1216o}
    \begin{split}
        \mathcal{L}(\mathbf{U}^{k+1}, \mbfW^{k+1}, V^k, \mathbf{Y}^{k+1}) & \leq \mathcal{L}(\mathbf{U}^{k+1}, \mbfW^{k+1}, V^k, \mathbf{Y}^k)  + \sum_{i=1}^p \sum_{l=1}^N\biggl[\frac{16L_{U_i^k}^2}{p^2\beta}\|W_i^{k-1,l} - W_i^{k-1,l-1}\|^2\\
        &  + \frac{16L_{U_i^{k+1}}^2+4NL_{U_i^k}^2}{p^2\beta}\|W_i^{k,l} - W_i^{k,l-1}\|^2  + \frac{4NL^2}{p^2\beta}\|U_i^{k,l} - U_i^{k,l-1}\|^2\biggr].
    \end{split}
\end{equation}
We obtain from \eqref{1216j} and \eqref{1216o} that
\begin{equation}\label{1216p}
    \begin{split}
        & \mathcal{L}(\mathbf{U}^{k+1}, \mbfW^{k+1}, V^k, \mathbf{Y}^{k+1}) + \sum_{i=1}^p\sum_{l=1}^N \biggl[\biggl(\frac{L_{W_i^k}}{2p}-\frac{4NL^2}{p^2\beta}\biggr)\|U_i^{k,l} - U_i^{k,l-1}\|^2\\
        & \quad + \biggl(\frac{L_{U_i^{k+1}}}{2p} - \frac{16L_{U_i^{k+1}}^2+4NL_{U_i^k}^2}{p^2\beta}\biggr)\|W_i^{k,l} - W_i^{k,l-1}\|^2\biggr] \\
        & \leq \mathcal{L}^k + \sum_{i=1}^p \sum_{l=1}^N\frac{16L_{U_i^k}^2}{p^2\beta}\|W_i^{k-1,l} - W_i^{k-1,l-1}\|^2 + \sum_{i \notin S_k}\sum_{l=1}^N \frac{L_{U_i^k} }{2p}\|W_i^{k-1,l} - W_i^{k-1,l-1}\|^2.
    \end{split}
\end{equation}
Moreover, by the update rule for \(V^{k+1}\) in \eqref{s:update} and  Lemma~\ref{convexity-smoothness}(c) again, we have
\begin{equation*}
    \mathcal{L}^{k+1} + \frac{p\beta}{2} \|V^{k+1} - V^k\|^2=\mathcal{L}(\mathbf{U}^{k+1}, \mbfW^{k+1}, V^{k+1}, \mathbf{Y}^{k+1}) + \frac{p\beta}{2} \|V^{k+1} - V^k\|^2  \leq \mathcal{L}(\mathbf{U}^{k+1}, \mbfW^{k+1}, V^k, \mathbf{Y}^{k+1}).
\end{equation*}
Adding this inequality with \eqref{1216p} verifies the inequality \eqref{ineL}. The proof is complete.
\end{proof}

We are ready to set up our main convergence results. The following theorem  guarantees almost sure subsequential convergence of the sequence generated by our Algorithm~\ref{FedLMFmm} to a stationary point.

\begin{theorem}[Almost Sure Subsequential Convergence]\label{t:ASC}
Let $\{\mathbf{U}^k, \mbfW^k, V^k, \mathbf{Y}^k\}$ be a sequence generated by Algorithm~\ref{FedLMFmm}. Suppose that Assumptions~\ref{basic:assumption} and~\ref{assump:sampling} are satisfied, and that $L_{U_i^k}$ and $L_{W_i^k}$ are bounded, i.e., 
\begin{equation}\label{eq:ULB}
\underaccent{\bar}{L}_U \leq L_{U_i^k} \leq \bar{L}_U \quad \text{and} \quad \underaccent{\bar}{L}_V \leq L_{W_i^k} \leq \bar{L}_V,
\end{equation}
for some constants $\underaccent{\bar}{L}_U, \bar{L}_U, \underaccent{\bar}{L}_V, \bar{L}_V > 0$. Additionally, suppose that $p_{\min}: = \min p_i>0$ and 
\[
\beta > \min\left\{\frac{8NL^2}{p\underaccent{\bar} L_V}, \frac{8(8+N)\bar L_U^2}{p_{\min}p\underaccent{\bar} L_U}\right\}.
\]
Then
\begin{enumerate}[label={\rm(\alph*)}]
    \item\label{t:ASC-b} The sequences $\{\sum_{i=1}^p\sum_{l=1}^N\|U_i^{k,l}-U_i^{k,l-1}\|^2\}$, $\{\sum_{i=1}^p\sum_{l=1}^N\|W_i^{k,l}-W_i^{k,l-1}\|^2\}\}$, $\{\|V^k - V^{k-1}\|^2\}$, and \(\{\sum_{i=1}^p \|W_i^k - V^k\|^2\}\) have finite sums (and, in particular, vanish) almost surely.
    \item\label{t:ASC-c} If $(\mathbf{U}^*, \mbfW^*, V^*, \mathbf{Y}^*)$ is a limit point of $\{\mathbf{U}^k, \mbfW^k, V^k, \mathbf{Y}^k\}$, then $(\mbfU^*, V^*)$ is a stationary point of $\Phi$, almost surely.
\end{enumerate}
\end{theorem}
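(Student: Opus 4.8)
The approach is to convert the one-step estimate of Lemma~\ref{lemmaL} into a supermartingale inequality and invoke the Robbins--Siegmund Lemma~\ref{supermartingale} for part (a), and then in part (b) to pass to the limit in the optimality conditions of the three subproblems.

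\textbf{Part (a).} I would first take the conditional expectation $\E_k[\cdot]$ of \eqref{ineL}. The only quantity on the right that genuinely depends on the fresh sampling $S_k$ is $\sum_{i\notin S_k}(\cdot)$; writing $\sum_{i\notin S_k}=\sum_{i=1}^p-\sum_{i\in S_k}$ and applying Lemma~\ref{conditionalexpectation} replaces it by $\sum_{i=1}^p(1-p_i)(\cdot)$, while on the left the step-difference terms pick up weights $p_i$ once $U_i^{k,l}$ is replaced by the virtual iterate $\bar U_i^{k,l}$ of \eqref{1122g} (for $i\notin S_k$ these differences vanish, so nothing is lost). Next I introduce the Lyapunov function
\[
\mathcal{R}^k:=\mathcal{L}^k+c\sum_{i=1}^p\sum_{l=1}^N\|W_i^{k-1,l}-W_i^{k-1,l-1}\|^2,
\]
and, after combining the lagged $W$-difference terms appearing on both sides of the averaged inequality (the $\tfrac{L_{U_i^k}}{2p}$ contributions cancel, leaving only $\mathcal{O}(\beta^{-1})$ lagged coefficients), I choose $c>0$ so that the residual lagged terms are dominated by $c\sum_{i,l}\|W_i^{k-1,l}-W_i^{k-1,l-1}\|^2\le\mathcal{R}^k-\mathcal{L}^k$. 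This yields $\E_k[\mathcal{R}^{k+1}]\le\mathcal{R}^k-Z_k$, where $Z_k$ is a combination of $\|V^{k+1}-V^k\|^2$ and of $p_i$-weighted multiples of $\|\bar U_i^{k,l}-\bar U_i^{k,l-1}\|^2$ and $\|W_i^{k,l}-W_i^{k,l-1}\|^2$. \emph{The main obstacle is the bookkeeping showing that every coefficient entering $Z_k$, as well as the absorption of the lagged terms into $c$, is strictly positive}: this is precisely where the lower bound imposed on $\beta$, the two-sided bounds \eqref{eq:ULB}, and $p_{\min}>0$ are used. Since $\mathcal{R}^k\ge\alpha$ by Assumption~\ref{basic:assumption}(a), Lemma~\ref{supermartingale} gives $\sum_k Z_k<\infty$ and a.s.\ convergence of $\{\mathcal{R}^k\}$ (hence of $\{\mathcal{L}^k\}$). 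Because $p_i\ge p_{\min}>0$ and the coefficients are positive, $\sum_k Z_k<\infty$ yields finite sums for $\{\|V^{k+1}-V^k\|^2\}$, $\{\sum_{i,l}\|\bar U_i^{k,l}-\bar U_i^{k,l-1}\|^2\}$ and $\{\sum_{i,l}\|W_i^{k,l}-W_i^{k,l-1}\|^2\}$; since $\|U_i^{k,l}-U_i^{k,l-1}\|\le\|\bar U_i^{k,l}-\bar U_i^{k,l-1}\|$ the first three assertions of (a) follow, and substituting them into \eqref{1216l} gives $\sum_k\|Y_i^{k+1}-Y_i^k\|^2<\infty$ for each $i$.

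\textbf{Part (a), constraint gap.} The assertion $\sum_k\sum_i\|W_i^k-V^k\|^2<\infty$ needs a separate argument. Put $a_i^k:=\|W_i^k-V^k\|$. For $i\in S_k$, \eqref{eq:Y_i} gives $W_i^{k+1}-V^k=\tfrac1\beta(Y_i^{k+1}-Y_i^k)$, while for $i\notin S_k$ one has $W_i^{k+1}=W_i^k$; combining with $\|W_i^{k+1}-V^{k+1}\|\le\|W_i^{k+1}-V^k\|+\|V^{k+1}-V^k\|$ gives the \emph{contracting} recursion
\[
a_i^{k+1}\le\mathbf{1}_{[i\notin S_k]}\,a_i^k+b_i^k,\qquad b_i^k:=\tfrac1\beta\|Y_i^{k+1}-Y_i^k\|+\|V^{k+1}-V^k\|,
\]
with $\sum_k(b_i^k)^2<\infty$ almost surely (and in expectation) by the bounds already obtained. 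Squaring, applying Young's inequality to the cross term with a parameter $\eta\in\bigl(0,\tfrac{p_{\min}}{1-p_{\min}}\bigr)$, and taking $\E_k[\cdot]$ produces $\E_k[(a_i^{k+1})^2]\le(1+\eta)(1-p_i)(a_i^k)^2+C\,\E_k[(b_i^k)^2]$ with $(1+\eta)(1-p_i)<1$; a second application of Lemma~\ref{supermartingale} gives $\sum_k(a_i^k)^2<\infty$ a.s., and summing over $i\in[p]$ finishes (a). The point is to contract \emph{before} squaring, which avoids the spurious factor $2$ that a direct estimate of $\|W_i^{k+1}-V^{k+1}\|^2$ would produce.

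\textbf{Part (b).} Work on the full-measure event on which (a) holds, and let $(\mathbf{U}^*,\mbfW^*,V^*,\mathbf{Y}^*)$ be the limit of a subsequence $\{(\mathbf{U}^{k_j},\mbfW^{k_j},V^{k_j},\mathbf{Y}^{k_j})\}$. Part (a) gives $W_i^*=V^*$ for all $i$; since the telescoped increments vanish, $\bar U_i^{k_j,l}\to U_i^*$ and $W_i^{k_j,l}\to V^*$ for every $l$, and $W_i^{k_j+1}\to V^*$, $V^{k_j+1}\to V^*$, $Y_i^{k_j+1}\to Y_i^*$, while Lemma~\ref{lemmay} and continuity of $\nabla_V f_i$ force $Y_i^*=-\tfrac1p\nabla_V f_i(U_i^*,V^*)$. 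I then pass to the limit along $\{k_j\}$ in the optimality condition of \eqref{1122g} at $l=N$ (valid for \emph{every} $i$, participating or not) and in that of the server update \eqref{s:update} at round $k_j$ (solved every round): writing each as a convex subgradient inequality for $R_i+\mathcal{I}_{\mathcal{U}_i}$, resp.\ $R+\mathcal{I}_{\mathcal{V}}$, and using lower semicontinuity of these functions, continuity of $\nabla_U f_i,\nabla_V f_i$, and \eqref{eq:ULB}, one obtains
\[
-\nabla_U f_i(U_i^*,V^*)\in\partial\bigl(R_i+\mathcal{I}_{\mathcal{U}_i}\bigr)(U_i^*)\ \ (i\in[p]),\qquad -\tfrac1p\sum_{i=1}^p\nabla_V f_i(U_i^*,V^*)\in\partial\bigl(R+\mathcal{I}_{\mathcal{V}}\bigr)(V^*).
\]
Since the nonsmooth part of $\Phi$ in \eqref{unconstrained} is separable over $U_1,\dots,U_p,V$ and each $f_i$ is $C^1$, the sum rule (Lemma~\ref{convexity-smoothness}(b)) assembles these into $0\in\partial\Phi(\mathbf{U}^*,V^*)$, i.e.\ $(\mathbf{U}^*,V^*)$ is a stationary point of $\Phi$, almost surely. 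The passage to the limit in the subgradient inequalities is clean because $R_i$ and $R$ are convex and l.s.c.\ and the iterates stay in the closed sets $\mathcal{U}_i,\mathcal{V}$; finiteness of $R_i(U_i^*)$ and $R(V^*)$ is forced by the limiting inequalities together with $\operatorname{dom}R_i\cap\mathcal{U}_i\neq\emptyset$ and $\operatorname{dom}R\cap\mathcal{V}\neq\emptyset$.
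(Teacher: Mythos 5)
Your proposal is correct and, for the core of the argument, follows essentially the same route as the paper: augment $\mathcal{L}^k$ with the lagged squared increments to build a Lyapunov sequence, take $\E_k$ of the descent estimate from Lemma~\ref{lemmaL}, convert $\sum_{i\in S_k}$ into $\sum_i p_i$ via Lemma~\ref{conditionalexpectation}, check that the bound on $\beta$ together with \eqref{eq:ULB} and $p_{\min}>0$ makes every net coefficient positive, and invoke Lemma~\ref{supermartingale}; part (b) then passes to the limit in the subproblem optimality conditions exactly as the paper does, using lower semicontinuity of $R_i$ and $R$ to upgrade $\limsup$ bounds to convergence of function values and Lemma~\ref{convexity-smoothness}(b) to assemble the partial inclusions into $0\in\partial\Phi(\mbfU^*,V^*)$.

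The one place you genuinely diverge is the constraint gap $\sum_k\sum_i\|W_i^k-V^k\|^2<\infty$. The paper argues in expectation: for $i\in S_k$ it writes $p^2\beta^2\|W_i^k-V^k\|^2\le 2p^2\beta^2\|W_i^{k+1}-W_i^k\|^2+2p^2\|Y_i^{k+1}-Y_i^k\|^2$, sums over $S_k$, applies Lemma~\ref{conditionalexpectation} to get the factor $p_i$ on the left, telescopes the total expectations against \eqref{1128a}, and concludes a.s.\ summability from summability of the expectations. Your contracting recursion $a_i^{k+1}\le\mathbf{1}_{[i\notin S_k]}a_i^k+b_i^k$ followed by Young's inequality and a second application of Robbins--Siegmund is also valid (the needed hypothesis $\sum_k\E_k[(b_i^k)^2]<\infty$ a.s.\ does follow from $\sum_k\E[(b_i^k)^2]<\infty$, as you note), and it has the mild advantage of staying entirely within the almost-sure framework rather than detouring through total expectations; the paper's version is shorter and reuses the bound \eqref{1216l} it already needs for Theorem~\ref{thorem2}. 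Both arguments deliver the same conclusion, so this is a stylistic rather than substantive difference.
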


\begin{proof}
Let us start to justify \ref{t:ASC-b}. Denote $\hat{\mathcal L}^{k+1}$ by
\begin{equation*}
\begin{split}
     \hat{\mathcal L}^{k+1} := &\ \mathcal L^{k+1}  + \frac{p\beta}{2}\|V^{k+1}-V^k\|^2 + \sum_{i=1}^p\sum_{l=1}^N \biggl[\biggl(\frac{L_{W_i^k}}{2p}-\frac{4NL^2}{p^2\beta}\biggr)\|U_i^{k,l} - U_i^{k,l-1}\|^2\\
        &  + \biggl(\frac{L_{U_i^{k+1}}}{2p} - \frac{16
        L_{U_i^{k+1}}^2+4NL_{U_i^k}^2}{p^2\beta}\biggr)\|W_i^{k,l} - W_i^{k,l-1}\|^2\biggr].
\end{split}
\end{equation*}
By Lemma \ref{lemmaL}, we have
\begin{equation*}\label{1122c}
    \begin{split}
        \hat{\mathcal{L}}^{k+1} & \leq \hat{\mathcal{L}}^k - \frac{p\beta}{2} \|V^k - V^{k-1}\|^2 - \sum_{i=1}^p\sum_{l=1}^N\biggl(\frac{L_{W_i^{k-1}}}{2p}-\frac{4NL^2}{p^2\beta}\biggr)\|U_i^{k-1,l} - U_i^{k-1,l-1}\|^2 \\
        & + \sum_{i=1}^p\sum_{l=1}^N \frac{32L_{U_i^k}^2+4NL_{U_i^{k-1}}^2}{p^2\beta}\|W_i^{k-1,l} - W_i^{k-1,l-1}\|^2  - \sum_{i \in S_k}\sum_{l=1}^N \frac{L_{U_i^k}}{2p}\|W_i^{k-1,l} - W_i^{k-1,l-1}\|^2 .
    \end{split}
\end{equation*}
Taking the conditional expectation $\E_k$ (given the history up to iteration $k$) and applying Lemma \ref{conditionalexpectation} leads us to
\begin{equation}\label{1126a}
    \begin{split}
        \E_k\hat{\mathcal{L}}^{k+1} & \leq \hat{\mathcal{L}}^k - \frac{p\beta}{2} \|V^k - V^{k-1}\|^2 - \sum_{i=1}^p\sum_{l=1}^N\biggl(\frac{L_{W_i^{k-1}}}{2p}-\frac{4NL^2}{p^2\beta}\biggr)\|U_i^{k-1,l} - U_i^{k-1,l-1}\|^2 \\
        &  - \sum_{i=1}^p\sum_{l=1}^N \biggl(\frac{p_iL_{U_i^k}}{2p} - \frac{32L_{U_i^k}^2+4NL_{U_i^{k-1}}^2}{p^2\beta}\biggr)\|W_i^{k-1,l} - W_i^{k-1,l-1}\|^2.
    \end{split}
\end{equation}
Since $0< p_{\rm min}\leq p_i<1,$ $\underaccent{\bar}{L}_U \leq L_{U_i^k} \leq \bar{L}_U$, $\underaccent{\bar}{L}_V \leq L_{W_i^k} \leq \bar{L}_V$,  and \(\beta > \min\left\{\frac{8NL^2}{p{\underaccent{\bar}{L}}_V}, \frac{8(8+N)\bar L_U^2}{p_{\min}p\underaccent{\bar}{L}_U}\right\}\), we have
\[
\frac{L_{W_i^{k-1}}}{2p}-\frac{4NL^2}{p^2\beta} \geq \frac{\underaccent{\bar} L_V}{2p}-\frac{4NL^2}{p^2\beta} > 0,
\]
and 
\begin{align*}
  \frac{p_iL_{U_i^k}}{2p} - \frac{32L_{U_i^k}^2+4NL_{U_i^{k-1}}^2}{p^2\beta} \geq \frac{p_{\min}\underaccent{\bar} L_U}{2p} - \frac{4(8+N)\bar L_U^2}{p^2\beta} > 0.
\end{align*}
Combining  this with \eqref{1126a} gives us
\begin{equation}\label{1216r}
\begin{split}
        \E_k\hat{\mathcal{L}}^{k+1} & \leq \hat{\mathcal{L}}^k - \frac{p\beta}{2} \|V^k - V^{k-1}\|^2 - \sum_{i=1}^p\sum_{l=1}^N\biggl(\frac{\underaccent{\bar} L_V}{2p}-\frac{4NL^2}{p^2\beta}\biggr)\|U_i^{k-1,l} - U_i^{k-1,l-1}\|^2 \\
        &  - \sum_{i=1}^p\sum_{l=1}^N \biggl(\frac{p_{\min}\underaccent{\bar} L_U}{2p} - \frac{4(8+N)\bar L_U^2}{p^2\beta}\biggr)\|W_i^{k-1,l} - W_i^{k-1,l-1}\|^2.
    \end{split} 
\end{equation}
Note that \(\hat{\mathcal{L}}^k \geq \alpha\) for all \(k\), where \(\alpha\) is the lower bound of \(\mathcal L\) from Assumption~\ref{basic:assumption}(a). As we can add any positive constant to problem ~\eqref{model} without changing its nature, it is possible to assume that \(\alpha \geq 0\). Consequently, from \eqref{1216r}, applying the super-martingale convergence theorem (Lemma~\ref{supermartingale}), we conclude that the sequences $\{\|V^k - V^{k-1}\|^2\}$, $\{\sum_{i=1}^p\sum_{l=1}^N\|U_i^{k-1,l}-U_i^{k-1,l-1}\|^2\}$, and $\{\sum_{i=1}^p\sum_{l=1}^N\|W_i^{k-1,l}-W_i^{k-1,l-1}\|^2\}\}$ almost surely have finite sums (in particular, they vanish), and $\{\hat{\mathcal L}^{k}\}$ almost surely converges to a nonnegative random variable $\hat{\mathcal L}^{\infty}$. 

Next, we prove that \(\{\sum_{i=1}^p \|W_i^k - V^k\|^2\}\) has a finite sum almost surely. Particularly, \( \|W_i^k - V^k\|\) converges to $0$ almost surely. From the update rule for \(Y_i^k\) and \eqref{1216l}, for any \(i \in S_k\), we have
\begin{equation}
    \begin{split}
        p^2\beta^2  \|W_i^k - V^k\|^2 & \leq 2 p^2\beta^2  \|W_i^{k+1} - W_i^k\|^2 + 2  p^2\beta^2 \|W_i^{k+1} - V^k\|^2 \\
        & = 2  p^2\beta^2 \|W_i^{k+1} - W_i^k\|^2 + 2 p^2 \|Y_i^{k+1} - Y_i^k\|^2 \\
        & \leq 2 p^2\beta^2  \|W_i^{k+1} - W_i^k\|^2 + 2 \sum_{l=1}^N\biggl[16\bar L_U^2\|W_i^{k-1,l} - W_i^{k-1,l-1}\|^2 \\
        & + 4(4+N)\bar L_U^2\|W_i^{k,l} - W_i^{k,l-1}\|^2  + 4NL^2\|U_i^{k,l} - U_i^{k,l-1}\|^2\biggr].
    \end{split}
\end{equation}
Consequently, this implies that 
\begin{equation*}
    \begin{split}
        \sum_{i \in S_k} p^2\beta^2  \|W_i^k - V^k\|^2 & \leq \sum_{i=1}^p 2 p^2\beta^2  \|W_i^{k+1} - W_i^k\|^2 + 2\sum_{i=1}^p\sum_{l=1}^N\biggl[16\bar L_U^2\|W_i^{k-1,l} - W_i^{k-1,l-1}\|^2 \\
        & + 4(4+N)\bar L_U^2\|W_i^{k,l} - W_i^{k,l-1}\|^2  + 4NL^2\|U_i^{k,l} - U_i^{k,l-1}\|^2\biggr].
    \end{split}
\end{equation*}
By taking the expectation conditioned on the history up to round $k$ and applying Lemma \ref{conditionalexpectation}, we arrive at
\begin{equation*}\label{1110d}
    \begin{split}
        \sum_{i=1}^p p_i p^2\beta^2  \|W_i^k - V^k\|^2 \leq &\ \E_k\biggl[\sum_{i=1}^p 2 p^2\beta^2  \|W_i^{k+1} - W_i^k\|^2 + 2\sum_{i=1}^p\sum_{l=1}^N\biggl[16\bar L_U^2\|W_i^{k-1,l} - W_i^{k-1,l-1}\|^2 \\
        & + 4(4+N)\bar L_U^2\|W_i^{k,l} - W_i^{k,l-1}\|^2  + 4NL^2\|U_i^{k,l} - U_i^{k,l-1}\|^2\biggr]\biggr].
    \end{split}
\end{equation*}
Taking the total expectation gives us that 
\begin{equation}\label{1126b}
    \begin{split}
        \sum_{i=1}^p p_i p^2\beta^2  \E\|W_i^k - V^k\|^2 & \leq \sum_{i=1}^p 2 p^2\beta^2  \E\|W_i^{k+1} - W_i^k\|^2 + \sum_{i=1}^p\sum_{l=1}^N\biggl[32\bar L_U^2\E\|W_i^{k-1,l} - W_i^{k-1,l-1}\|^2 \\
        & + 8(4+N)\bar L_U^2\E\|W_i^{k,l} - W_i^{k,l-1}\|^2  + 8NL^2\E\|U_i^{k,l} - U_i^{k,l-1}\|^2\biggr].
    \end{split}
\end{equation}
Note from the Cauchy-Schwarz inequality that 
\[
\|W_i^{k+1} - W_i^k\|^2\le \sum_{l=1}^N N\|W_i^{k,l} - W_i^{k,l-1}\|^2.
\]
By telescoping \eqref{1126b} over \(k = 0, \ldots, K\) together with the above inequality, we derive that
\begin{equation}\label{1126c}
    \begin{split}
        \sum_{k=0}^K \sum_{i=1}^p p_i p^2\beta^2  \E\|W_i^k - V^k\|^2 \leq &\ \sum_{k=0}^K \sum_{i=1}^p\sum_{l=1}^N\biggl[\left(2Np^2\beta^2 + 8(8+N)\bar L_U^2\right)\E\|W_i^{k,l} - W_i^{k,l-1}\|^2\\
        & + 8NL^2\E\|U_i^{k,l} - U_i^{k,l-1}\|^2\biggr].
    \end{split}
\end{equation}
Taking the total expectation in \eqref{1216r} and telescoping over \(k = 0, \ldots, K\) lead us to
\begin{equation}\label{1128a}
    \begin{split}
        & \sum_{k=0}^K \frac{p\beta}{2} \E\|V^k - V^{k-1}\|^2 + \sum_{k=0}^K \sum_{i=1}^p\sum_{l=1}^N\biggl(\frac{\underaccent{\bar} L_V}{2p}-\frac{4NL^2}{p^2\beta}\biggr)\E\|U_i^{k-1,l} - U_i^{k-1,l-1}\|^2 \\
        &  + \sum_{k=0}^K\sum_{i=1}^p\sum_{l=1}^N \biggl(\frac{p_{\min}\underaccent{\bar} L_U}{2p} - \frac{4(8+N)\bar L_U^2}{p^2\beta}\biggr)\E\|W_i^{k-1,l} - W_i^{k-1,l-1}\|^2 \\
        & \leq \E\hat{\mathcal{L}}^0 - \E\hat{\mathcal{L}}^{K+1} \\
        & \leq \mathcal{L}^0 - \alpha.
    \end{split}
\end{equation}
It follows from \eqref{1128a} and \eqref{1126c} that
\begin{equation}
    \sum_{k=0}^K \sum_{i=1}^p p_i p^2\beta^2  \E\|W_i^k - V^k\|^2 < \infty.
\end{equation}
As a result, \(\{\sum_{i=1}^p \|W_i^k - V^k\|^2\}\) has a finite sum, which tells us that  \(\|W_i^k - V^k\|\) converges to $0$ almost surely for any $i\in[p]$. We conclude part (a) of the theorem. 

It remains to prove \ref{t:ASC-c}.  First, we claim that 
\begin{equation}\label{1122f}
    \lim_{k\to+\infty}[\bar U_i^{k,1} - U_i^k] = 0 \quad\text{almost surely},
\end{equation}
where $\bar U_i^{k,1}$ is defined in \eqref{1122g}. To justify this claim, recall from \eqref{u:update} that \(\bar U_i^{k,1}=U_i^{k,1}\) for any \(i\in S_k\). It follows that
\begin{equation*}\label{1216s}
    \sum_{i\in S_k}\|\bar U_i^{k,1} - U_i^k\|^2 \leq \sum_{i=1}^p\sum_{l=1}^N\|U_i^{k,l} - U_i^{k,l-1}\|^2.
\end{equation*}
By taking the total expectation and applying Lemma \ref{conditionalexpectation} 
with $\Theta_i^k = \|\bar{U}_i^{k,1} - U_i^k\|^2$, we arrive at
\begin{equation}\label{1216t}
    \sum_{i=1}^pp_i\E\|\bar U_i^{k,1} - U_i^k\|^2 \leq \sum_{i=1}^p\sum_{l=1}^N\E\|U_i^{k,l} - U_i^{k,l-1}\|^2.
\end{equation}
The claim now follows from \eqref{1128a} and \eqref{1216t}.

From part \ref{t:ASC-b} and \eqref{1122f}, for any sequence \(\{(\mbfU^k, \mbfW^k, V^k, \mbfY^k)\}\) generated by Algorithm~\ref{FedLMFmm}, and for all \(i \in [p]\), we have
\begin{equation}\label{1114b}
   \begin{split}
       &\lim_{k \to +\infty} [V^{k+1} - V^k] = 0, \quad \lim_{k \to +\infty} [W_i^k - V^k] = 0, \quad \lim_{k \to +\infty} [\bar{U}_i^{k,1} - U_i^k] = 0, \\
       &\lim_{k \to +\infty} [W_i^{k,l} - W_i^{k,l-1}] = 0, \quad \text{and} \quad \lim_{k \to +\infty} [U_i^{k,l} - U_i^{k,l-1}] = 0,
   \end{split}
\end{equation}
almost surely. Let \((\mbfU^*, \mbfW^*, V^*, \mbfY^*)\) be a limit point of \(\{(\mbfU^k, \mbfW^k, V^k, \mbfY^k)\}\) in the sense that  there exists a subsequence \(\{(\mbfU^{k_j}, \mbfW^{k_j}, V^{k_j}, \mbfY^{k_j})\}\) such that
\[
(\mbfU^{k_j}, \mbfW^{k_j}, V^{k_j}, \mbfY^{k_j}) \to (\mbfU^*, \mbfW^*, V^*, \mbfY^*) \quad \text{as } j \to +\infty.
\]
From \eqref{1114b}, it follows that
\[
\lim_{j \to +\infty} V^{k_j+1} = \lim_{j \to +\infty} V^{k_j} = \lim_{j \to +\infty} W_i^{k_j+1} = \lim_{j \to +\infty} W_i^{k_j,l} = \lim_{j \to +\infty} W_i^{k_j} = V^*,
\]
and
\[
\lim_{j \to +\infty} \bar{U}_i^{k_j,1} = \lim_{j \to +\infty} U_i^{k_j} = U_i^*.
\]
This tells us that $\mbfW^*=[V^*;\ldots; V^*]$.  
From the update rule of \(Y_i^{k_j+1}\), for any \(i \in [p]\), we have
\[
\|Y_i^{k_j+1} - Y_i^{k_j}\| = \beta \|W_i^{k_j+1} - V_i^{k_j}\|,
\]
which implies that \(\{Y_i^{k_j+1}\}\) also converges to \(Y_i^*\). From the definition of \(\bar{U}_i^{k_j,1}\) in \eqref{1122g}, we have
\begin{equation}\label{1122h}
   R_i(\bar{U}_i^{k_j,1}) \leq R_i(U_i) + \langle \nabla_U f_i(U_i^{k_j}, W_i^{k_j}), U_i - \bar{U}_i^{k_j,1} \rangle + \frac{L_{W_i^{k_j}}}{2} \|U_i - U_i^{k_j}\|^2 - \frac{L_{W_i^{k_j}}}{2} \|\bar{U}_i^{k_j,1} - U_i^{k_j}\|^2
\end{equation}
for any $U_i\in \mathcal{U}_i$. 
By choosing \(U_i = U_i^*\) and letting \(j \to +\infty\), we obtain
\[
\limsup_{j \to +\infty} R_i(\bar{U}_i^{k_j,1}) \leq R_i(U_i^*)\quad \mbox{for any}\quad U_i\in \mathcal{U}_i.
\]
Since \(R_i\) is lower semi-continuous, \(R_i(\bar{U}_i^{k_j,1})\) converges to \(R_i(U_i^*)\). By letting \(j \to +\infty\) in \eqref{1122h}, we have
\[
R_i(U_i^*) \leq R_i(U_i) + \langle \nabla_U f_i(U_i^*, W_i^*), U_i - U_i^* \rangle + \frac{\bar L_V}{2} \|U_i - U_i^*\|^2,
\]
which means that  \(U_i^*\) solves the following optimization problem
\begin{equation}\label{1122j}
    \min_{U_i \in \mathcal{U}_i}\quad F_i(M_i,U_i V^*)+ \langle \nabla_U f_i(U_i^*, V^*), U_i - U_i^* \rangle + \frac{\bar L_V}{2} \|U_i - U_i^*\|^2 - f_i(U_i, V^*).
\end{equation}
By writing the optimality condition for \eqref{1122j}, we obtain \(0 \in \partial_{U_i} \Phi(\mbfU^*, V^*)\) with $\Phi$ being defined in \eqref{unconstrained}, as $f_i$ is continuously differentiable.

Next, let us choose \(k = k_j + 1\) in \eqref{lemya} and let \(j \to +\infty\). We have 
\begin{equation}\label{1122m}
    Y_i^* = -\frac{1}{p} \nabla_V f_i(U_i^*, V^*).
\end{equation}
Note from the definition of \(V^{k_j+1}\) that
\begin{equation}\label{1122k}
    R(V^{k_j+1}) \leq R(V) + \sum_{i=1}^p \left[\langle Y_i^{k_j+1}, V^{k_j+1} - V \rangle + \frac{\beta}{2} \|W_i^{k_j+1} - V\|^2 - \frac{\beta}{2} \|W_i^{k_j+1} - V^{k_j+1}\|^2 \right]\quad \forall\, V \in \mathcal{V}.
\end{equation}
Plugging \(V = V^*\) into \eqref{1122k} and letting \(j \to +\infty\) give us that 
\[
\limsup_{j \to +\infty} R(V^{k_j+1}) \leq R(V^*).
\]
Since \(R\) is lower semi-continuous, the above inequality implies that \(R(V^{k_j+1})\) also converges to \(R(V^*)\). By taking \(j \to +\infty\) in \eqref{1122k}, we arrive at
\begin{equation}\label{1122l}
    R(V^*) \leq R(V) + \sum_{i=1}^p \left[\langle Y_i^*, V^* - V \rangle + \frac{\beta}{2} \|V^* - V\|^2 \right]\quad \mbox{for any}\quad V\in \mathcal{V}.
\end{equation}
Combining \eqref{1122m} with \eqref{1122l} leads us to
\[
R(V^*) \leq R(V) + \sum_{i=1}^p \left[\frac{1}{p}\langle  \nabla_V f_i(U_i^*, V^*), V - V^* \rangle + \frac{\beta}{2} \|V^* - V\|^2 \right].
\]
Consequently, \(V^*\) solves the following optimization problem
\begin{equation}
    \min_{V \in \mathcal{V}}\quad   F(\mbfU^*, V) + \sum_{i=1}^p \left[\frac{1}{p}\langle  \nabla_V f_i(U_i^*, V^*), V - V^* \rangle + \frac{\beta}{2} \|V^* - V\|^2 \right] - \sum_{i=1}^p \frac{1}{p} f_i(U_i^*, V).
\end{equation}
By writing its optimality at $V^*$, we also have  \(0 \in \partial_V \Phi(\mbfU^*, V^*)\). It follows that $(0,0)\in \partial \Phi(\mbfU^*, V^*)$, i.e., $(\mbfU^*, V^*)$ is a stationary/critical point of problem \(\Phi\) in \eqref{unconstrained}. The proof is complete. 
\end{proof}


Finally, we establish the communication round complexity required to achieve a stationary point.

\begin{theorem}[Round complexity]\label{thorem2}
Let $\{\mathbf{U}^k, \mbfW^k, V^k, \mathbf{Y}^k\}$ be a sequence generated by Algorithm~\ref{FedLMFmm}. Suppose that Assumptions~\ref{basic:assumption} and~\ref{assump:sampling} are satisfied, and that \(L_{U_i^k}\) and \(L_{W_i^k}\) are bounded, i.e., 
\[
\underaccent{\bar}{L}_U \leq L_{U_i^k} \leq \bar{L}_U \quad \text{and} \quad \underaccent{\bar}{L}_V \leq L_{W_i^k} \leq \bar{L}_V,
\]
for some constants \(\underaccent{\bar}{L}_U, \bar{L}_U, \underaccent{\bar}{L}_V, \bar{L}_V > 0\). Additionally, suppose that for any fixed \(U_i\), the mapping \(\nabla_U f_i(U_i, \cdot)\) is \(\kappa\)-Lipschitz on \(\mathcal{V}\) for some \(\kappa > 0\), $p_{\min} = \min p_i>0$ and 
\[
\beta > \min\left\{\frac{8NL^2}{p\underaccent{\bar} L_V}, \frac{8(8+N)\bar L_U^2}{p_{\min}p\underaccent{\bar} L_U}\right\}.
\]
Then, for any positive integer \(K\), we have 
\begin{equation}\label{100724a}
\frac{1}{K}\sum_{k=1}^K\E \dist\left(0,\partial \Phi(\bar{\mbfU}^{k,1},V^k)\right)^2
    \leq \frac{C(\mathcal L^0 - \alpha)}{K},
\end{equation}
where 
\[
\begin{split}
   C : =  &\ \biggl[\frac{4(N\beta^2p^2 + 4(8+N)\bar L_U^2)(3(\bar L_U^2 +p\beta)^2p + 2\kappa^2) + 24p_{\min}p^3\beta^2\bar L_U^2}{p_{\min}p\beta\underaccent{\bar} L_U - 8(8+N)\bar L_U^2}\\
        & + \frac{6(8N(\bar L_U^2 +p\beta)^2 + p^2\beta^2)L^2p + 32NL^2\kappa^2 + 16p^2\beta^2\bar L_V^2}{p\beta\underaccent{\bar} L_V - 8NL^2} \biggr]\frac{1}{p_{\min}p^2\beta}.
\end{split}
\]
Consequently, if \((\hat{\mbfU}^{K,1},\hat{V}^K)\) is chosen uniformly from \(\{(\bar{\mbfU}^{1,1}, V^1), \ldots, (\bar{\mbfU}^{K,1}, V^K)\}\), then we have
\[
\E\dist^2\left(0,\partial \Phi(\hat{\mbfU}^{K,1}, \hat{V}^K)\right)  \leq \frac{C(\mathcal L^0 - \alpha)}{K} = \mathcal{O}(1/K).
\]
In other words, the number of communication rounds \(K\) needed to obtain an \(\epsilon\)-stationary point of \(\Phi\) is at most 
\[
K = \frac{C(\mathcal L^0 - \alpha)}{\epsilon^2} = \mathcal{O}(1/\epsilon^2).
\]
\end{theorem}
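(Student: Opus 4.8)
The plan is to convert the telescoped descent estimate already obtained in the proof of Theorem~\ref{t:ASC} --- specifically inequality \eqref{1128a}, complemented by the consensus estimate \eqref{1126c} --- into a quantitative bound on a subgradient of $\Phi$ evaluated at the virtual point $(\bar{\mbfU}^{k,1},V^k)$. The key structural fact is that $\Phi$ has a product subdifferential, $\dist^2(0,\partial\Phi(\mbfU,V))=\sum_{i=1}^p\dist^2(0,\partial_{U_i}\Phi(\mbfU,V))+\dist^2(0,\partial_V\Phi(\mbfU,V))$, since each $f_i$ is $C^1$ and the nonsmooth part of $\Phi$ is block separable (cf. Lemma~\ref{convexity-smoothness}(b)); hence it suffices to exhibit, for each block, an explicit subgradient element of small norm. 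Both $\bar U_i^{k,1}$ (defined by \eqref{1122g}) and $V^k$ (produced by the server step \eqref{s:update} of round $k-1$) are exact minimizers of strongly convex proximal subproblems; writing their first-order optimality conditions and re-centering the linearized gradients at $(\bar{\mbfU}^{k,1},V^k)$ produces such elements, with norms controlled by the per-step increments $\|\bar U_i^{k,1}-U_i^k\|$, $\|W_i^{k,l}-W_i^{k,l-1}\|$ and by the consensus residuals $\|W_i^k-V^k\|$.

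For the $U$-block I would take the optimality condition of \eqref{1122g}, which gives \[ -\tfrac1p\nabla_U f_i(U_i^k,W_i^k)-\tfrac{L_{W_i^k}}{p}(\bar U_i^{k,1}-U_i^k)\in\tfrac1p\partial R_i(\bar U_i^{k,1})+\partial\mathcal I_{\mathcal U_i}(\bar U_i^{k,1}), \] and add $\tfrac1p\nabla_U f_i(\bar U_i^{k,1},V^k)$ to both sides to obtain an element $d_i^k\in\partial_{U_i}\Phi(\bar{\mbfU}^{k,1},V^k)$. Passing through the intermediate gradient $\nabla_U f_i(\bar U_i^{k,1},W_i^k)$ and using the $\kappa$-Lipschitz continuity of $\nabla_U f_i(U_i^k,\cdot)$ on $\mathcal V$ together with the $L_{W_i^k}$-smoothness of $f_i(\cdot,W_i^k)$ (and $L_{W_i^k}\le\bar L_V$) yields $p\|d_i^k\|\le\kappa\|W_i^k-V^k\|+2\bar L_V\|\bar U_i^{k,1}-U_i^k\|$. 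Squaring, summing over $i$, taking total expectation, and summing over $k=1,\dots,K$, the term $\sum_k\sum_i\E\|\bar U_i^{k,1}-U_i^k\|^2$ is absorbed via \eqref{1216t} (at the price of a factor $1/p_{\min}$) followed by \eqref{1128a}, while $\sum_k\sum_i\E\|W_i^k-V^k\|^2$ is absorbed via \eqref{1126c} followed by \eqref{1128a}; both contributions are $O(\mathcal L^0-\alpha)$.

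For the $V$-block, the optimality condition of the round-$(k-1)$ server problem \eqref{s:update} gives $\sum_{i=1}^p[Y_i^k+\beta(W_i^k-V^k)]\in\partial R(V^k)+\partial\mathcal I_{\mathcal V}(V^k)$; adding $\tfrac1p\sum_i\nabla_V f_i(\bar U_i^{k,1},V^k)$ produces $e^k\in\partial_V\Phi(\bar{\mbfU}^{k,1},V^k)$. Substituting the closed form of $Y_i^k$ from Lemma~\ref{lemmay} (equation \eqref{lemya}), splitting $\nabla_V f_i(\bar U_i^{k,1},V^k)-\nabla_V f_i(U_i^k,W_i^{k-1,N-1})$ through $\nabla_V f_i(U_i^k,V^k)$, and using the $L$-Lipschitz continuity of $\nabla_V f_i(\cdot,V^k)$ (Assumption~\ref{basic:assumption}(c)) together with the $L_{U_i^k}$-smoothness of $f_i(U_i^k,\cdot)$ and $W_i^k=W_i^{k-1,N}$, I obtain a bound of the form $\|e^k\|\le c_1\sum_i\|\bar U_i^{k,1}-U_i^k\|+c_2\sum_i\|W_i^k-V^k\|+c_3\sum_i\|W_i^{k-1,N}-W_i^{k-1,N-1}\|$ with constants $c_j$ built from $\bar L_U,\bar L_V,L,\kappa,\beta,p$. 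The last family of terms is dominated by $\sum_k\sum_i\sum_l\E\|W_i^{k,l}-W_i^{k,l-1}\|^2$, again controlled by \eqref{1128a}. Combining the two block estimates, summing over $k$, and collecting constants yields $\sum_{k=1}^K\E\dist^2(0,\partial\Phi(\bar{\mbfU}^{k,1},V^k))\le C(\mathcal L^0-\alpha)$ with the claimed $C$; dividing by $K$ gives \eqref{100724a}, and the final assertion follows because $\E\dist^2(0,\partial\Phi(\hat{\mbfU}^{K,1},\hat V^K))$ equals the average in \eqref{100724a} by the definition of the uniform random index.

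The part I expect to be most delicate is the constant bookkeeping needed to land on the exact $C$: one must carry the positive denominators $p\beta\underaccent{\bar}{L}_V-8NL^2$ and $p_{\min}p\beta\underaccent{\bar}{L}_U-8(8+N)\bar L_U^2$ appearing in $C$ (positive under the standing assumption on $\beta$) through both \eqref{1128a} and \eqref{1126c}, account for the Cauchy--Schwarz multiplicities introduced when expanding the telescoped differences and the several triangle-inequality pieces in the $e^k$ estimate, and keep the $1/p_{\min}$ and $1/(p_{\min}p^2\beta)$ factors that arise from passing between the active, inactive, and virtual quantities via Lemma~\ref{conditionalexpectation} and \eqref{1216t}. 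A small point worth noting is that $V^k$ (for $k\ge1$) is characterized as the minimizer of the round-$(k-1)$ server subproblem, which is why the average in \eqref{100724a} runs from $k=1$.
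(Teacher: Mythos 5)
Your proposal is correct and follows essentially the same route as the paper's proof: it identifies the identical subgradient elements from the optimality conditions of \eqref{1122g} and \eqref{s:update} (the paper's $\zeta_i^{k+1}$ and $\nu^{k+1}$, written at the shifted index $k$), bounds them by the same triangle-inequality pieces $\|\bar U_i^{k,1}-U_i^k\|$, $\|W_i^k-V^k\|$, and $\|W_i^{k-1,N}-W_i^{k-1,N-1}\|$ via the same Lipschitz constants, and then sums using exactly the estimates \eqref{1128a}, \eqref{1126c}, and \eqref{1216t}. The only differences are cosmetic (index shift and the product-subdifferential phrasing in place of the paper's inequality \eqref{122d}), so no further comment is needed.
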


\begin{proof}
From the update rule of Algorithm~\ref{FedLMFmm} for \(V^{k+1}\) in \eqref{s:update}, it follows that for all \(k \geq 0\),
\begin{equation}\label{1127a}
\nu^{k+1} := \sum_{i=1}^p\left[Y_i^{k+1} + \beta(W_i^{k+1} - V^{k+1})\right] \in \partial_V [R+ \mathcal I_{\mathcal V}](V^{k+1}).
\end{equation}
Thus, by invoking Lemma~\ref{convexity-smoothness} (b), it follows that
\begin{equation}
    \frac{1}{p}\sum_{i=1}^p\nabla_V f_i(\bar U_i^{k+1,1}, V^{k+1}) + \nu^{k+1} \in \partial_V\Phi(\bar\mbfU^{k+1,1}, V^{k+1}).
\end{equation}
Additionally, from Lemma~\ref{lemmay}, we have
\begin{equation}\label{1217a}
    Y_i^{k+1} = - \frac{1}{p}\left[\nabla_V f_i(U_i^{k+1}, W_i^{k,N-1}) + L_{U_i^{k+1}}(W_i^{k+1} - W_i^{k,N-1})\right].
\end{equation}
From the definition of \(\bar U_i^{k+1,1}\) (see \eqref{1122g}), we have
\begin{equation}\label{1127b}
\zeta_i^{k+1} := -\nabla_Uf_i(U_i^{k+1},W_i^{k+1}) - L_{W_i^{k+1}}(\bar U_i^{k+1,1} - U_i^{k+1}) \in \partial_{U_i} [R_i+ \mathcal I_{\mathcal U_i}](\bar U_i^{k+1,1}).
\end{equation}
Thus, by invoking Lemma~\ref{convexity-smoothness} (b), it follows that
\begin{equation*}
    \frac{1}{p}[\nabla_{U_i}f_i(\bar U_i^{k+1,1}, V^{k+1}) + \zeta_i^{k+1}] \in \partial_{U_i}\Phi(\bar\mbfU^{k+1,1}, V^{k+1},).
\end{equation*}
Therefore, we arrive at
\begin{equation}\label{122d}
    \begin{split}
       \dist\left(0,\partial \Phi(\bar\mbfU^{k+1,1}, V^{k+1})\right)^2 & \leq \left\|\frac{1}{p}\sum_{i=1}^p\nabla_V f_i(\bar U_i^{k+1,1}, V^{k+1}) + \nu^{k+1}\right\|^2 \\
       & \quad + \sum_{i=1}^p\left\|\frac{1}{p}[\nabla_{U_i}f_i(\bar U_i^{k+1,1}, V^{k+1}) + \zeta_i^{k+1}]\right\|^2.
    \end{split}
\end{equation}
On the other hand, from \eqref{1127a} and \eqref{1217a} we have
\begin{equation}
    \begin{split}
        &\left\|\frac{1}{p}\sum_{i=1}^p\nabla_V f_i(\bar U_i^{k+1,1}, V^{k+1}) + \nu^{k+1}\right\| \\
        = & \Bigg\|\frac{1}{p}\sum_{i=1}^p\big[\nabla_V f_i(\bar U_i^{k+1,1}, V^{k+1}) - \nabla_V f_i(U_i^{k+1}, W_i^{k,N-1})  - L_{U_i^{k+1}}(W_i^{k+1} - W_i^{k,N-1}) \\
        & \quad + p\beta(W_i^{k+1} - V^{k+1})\big]\Bigg\| \\
        \leq & \sum_{i=1}^p\bigg[L/p\|\bar U_i^{k+1,1} - U_i^{k+1}\| + L_{U_i^{k+1}}/p\|V^{k+1}-W_i^{k,N-1}\| + L_{U_i^{k+1}}/p\|W_i^{k+1}-W_i^{k,N-1}\| \\
        & \quad + \beta\|W_i^{k+1} - V^{k+1}\|\bigg] \\
        \leq & \sum_{i=1}^p\bigg[L/p\|\bar U_i^{k+1,1} - U_i^{k+1}\| + (\bar L_U/p+\beta)\|V^{k+1}-W_i^{k+1}\| + 2\bar L_U/p\|W_i^{k+1}-W_i^{k,N-1}\|\bigg].
    \end{split}
\end{equation}
Thus, we obtain
\begin{equation}\label{1128b}
    \begin{split}
        & \left\|\frac{1}{p}\sum_{i=1}^p\nabla_V f_i(\bar U_i^{k+1,1}, V^{k+1}) + \nu^{k+1}\right\|^2 \\
        \leq & 3\sum_{i=1}^p\bigg[ L^2/p\|\bar U_i^{k+1,1} - U_i^{k+1}\|^2 + (\bar L_U+p\beta)^2/p\|V^{k+1}-W_i^{k+1}\|^2 + 4\bar L_U^2/p\|W_i^{k+1}-W_i^{k,N-1}\|^2\bigg],
    \end{split}
\end{equation}
It follows from \eqref{1128a} that
\begin{align}
    \sum_{k=0}^K\sum_{i=1}^p\sum_{l=1}^N\E\|W_i^{k,l}-W_i^{k,l-1}\|^2 & \leq \frac{2p^2\beta(\mathcal L^0 - \alpha)}{p_{\min}p\beta\underaccent{\bar} L_U - 8(8+N)\bar L_U^2} \label{1217b}\\
    \sum_{k=0}^K\sum_{i=1}^p\sum_{l=1}^N\E\|U_i^{k,l}-U_i^{k,l-1}\|^2 & \leq \frac{2p^2\beta(\mathcal L^0 - \alpha)}{p\beta\underaccent{\bar} L_V - 8NL^2}\label{1217c}.
\end{align}
From \eqref{1126c}, \eqref{1217b}, and \eqref{1217c}, we arrive at
\begin{equation}\label{1217d}
  \sum_{k=0}^K\sum_{i=1}^p  \E\|W_i^k-V^k\|^2 \leq \biggl[\frac{4(N\beta^2p^2 + 4(8+N)\bar L_U^2)}{p_{\min}p\beta\underaccent{\bar} L_U - 8(8+N)\bar L_U^2} + \frac{16NL^2}{p\beta\underaccent{\bar} L_V - 8NL^2} \biggr]\frac{\mathcal L^0 - \alpha}{p_{\min}\beta}
\end{equation}
It follows from \eqref{1216t} and \eqref{1217c} that
\begin{equation}\label{1217e}
    \sum_{k=0}^K\sum_{i=1}^p\E\|\bar U_i^{k,1}-U_i^k\|^2 \leq \frac{2p^2\beta(\mathcal L^0 - \alpha)}{p_{\min}(p\beta\underaccent{\bar} L_V - 8NL^2)}
\end{equation}
By taking the total expectation of \eqref{1128b}, telescoping over \(k = 0, \ldots, K-1\), and using \eqref{1217b}, \eqref{1217d}, and \eqref{1217e}, we arrive at
\begin{equation}\label{122b}
    \begin{split}
        \sum_{k=0}^{K-1}\E\left\|\frac{1}{p}\sum_{i=1}^p\nabla_V f_i(\bar U_i^{k+1,1}, V^{k+1}) + \nu^{k+1}\right\|^2
        \leq &\ \biggl[\frac{12(N\beta^2p^2 + 4(8+N)\bar L_U^2)(\bar L_U^2 +p\beta)^2 + 24p_{\min}p^2\beta^2\bar L_U^2}{p_{\min}p\beta\underaccent{\bar} L_U - 8(8+N)\bar L_U^2}\\
        & + \frac{6(8N(\bar L_U^2 +p\beta)^2 + p^2\beta^2)L^2}{p\beta\underaccent{\bar} L_V - 8NL^2} \biggr]\frac{\mathcal L^0 - \alpha}{p_{\min}p\beta}.
    \end{split}
\end{equation}

Similarly, from \eqref{1127b}, we have
\[
\begin{split}
    \left\|\frac{1}{p}[\nabla_{U_i}f_i(\bar U_i^{k+1,1}, V^{k+1}) + \zeta_i^{k+1}]\right\| & = \ \frac{1}{p}\left\|\nabla_{U_i}f_i(\bar U_i^{k+1,1}, V^{k+1}) - \nabla_{U_i}f_i(U_i^{k+1}, W_i^{k+1}) - L_{W_i^{k+1}}(\bar U_i^{k+1,1}- U_i^{k+1})\right\|\\
    & \leq  2\bar L_V/p\|\bar U_i^{k+1,1}- U_i^{k+1}\| + \kappa/p\|V^{k+1}-W_i^{k+1}\|.
\end{split}
\]
This implies that
\[
\begin{split}
     \left\|\frac{1}{p}[\nabla_{U_i}f_i(\bar U_i^{k+1,1}, V^{k+1}) + \zeta_i^{k+1}]\right\|^2 &  \leq  8\bar L_V^2/p^2\|\bar U_i^{k+1,1}- U_i^{k+1}\|^2 + 2\kappa^2/p^2\|V^{k+1}-W_i^{k+1}\|^2
    \end{split}
\]
This together with \eqref{1217d} and \eqref{1217e} gives us that
\begin{equation}\label{122c}
    \begin{split}
         \sum_{k=0}^{K-1}\sum_{i=1}^p\E\|\frac{1}{p}[\nabla_{U_i}f_i(\bar U_i^{k+1,1}, V^{k+1}) + \zeta_i^{k+1}]\|^2 
        \leq &\ \biggl[\frac{8(N\beta^2p^2 + 4(8+N)\bar L_U^2)\kappa^2}{p_{\min}p\beta\underaccent{\bar} L_U - 8(8+N)\bar L_U^2}\\
        & + \frac{32NL^2\kappa^2 + 16p^2\beta^2\bar L_V^2}{p\beta\underaccent{\bar} L_V - 8NL^2} \biggr]\frac{\mathcal L^0 - \alpha}{p_{\min}p^2\beta}.
    \end{split}
\end{equation}
By combining \eqref{122b}, \eqref{122c}, and \eqref{122d}, we conclude
\begin{equation*}
    \begin{split}
        \sum_{k=0}^{K-1}\E \dist\left(0,\partial \Phi(\bar\mbfU^{k+1,1}, V^{k+1})\right)^2 
        \leq&\ \biggl[\frac{4(N\beta^2p^2 + 4(8+N)\bar L_U^2)(3(\bar L_U^2 +p\beta)^2p + 2\kappa^2) + 24p_{\min}p^3\beta^2\bar L_U^2}{p_{\min}p\beta\underaccent{\bar} L_U - 8(8+N)\bar L_U^2}\\
        & + \frac{6(8N(\bar L_U^2 +p\beta)^2 + p^2\beta^2)L^2p + 32NL^2\kappa^2 + 16p^2\beta^2\bar L_V^2}{p\beta\underaccent{\bar} L_V - 8NL^2} \biggr]\frac{\mathcal L^0 - \alpha}{p_{\min}p^2\beta}.
    \end{split}
\end{equation*}
This completes the proof.
\end{proof}

\section{Numerical experiments}\label{sec:exp}

We evaluate the convergence behavior and performance of our proposed algorithm, Algorithm \ref{FedLMFmm} (\texttt{FedMC-ADMM}), for the federated MC problem. To assess its effectiveness, we compare \texttt{FedMC-ADMM} with \texttt{FedMAvg} \cite{wang2022federated}, using objective values and test Root Mean Square Errors (RMSE) as our performance metrics. It is important to note that we do not compare with \texttt{FedMGS} \cite{wang2022federated} since its gradient-sharing framework requires clients to send gradients to the server. In federated matrix completion, this approach does not protect the client's data privacy. Specifically, the server could infer the rating data of the clients directly from the gradients, as demonstrated in \cite{chai2020secure}.

All the algorithms are initialized with random values for $(\mbfU^0, V^0)$, where the entries are sampled from a uniform distribution over $[0, 1]$. The experiments are conducted on a Windows workstation with configurations: 13th Gen Intel(R) Core(TM) i9-13900K 3.00 GHz processors and 128GB RAM. The implementation is written in Python and C, and the source code is available at \url{https://github.com/nhatpd/FedMC-ADMM}.


\subsection{Data sets and the general method}
We conducted our experiments on two widely used datasets in recommendation systems, MovieLens and Netflix, both of which contain user rating data. 
The characteristics of these datasets are summarized in Table~\ref{dataset}. We denote an $m\times n$ matrix $M$ for these datasets, where incompleted entries are replaced by zeros. For the experiments, each dataset was distributed among 100 clients to simulate a FL environment. Specifically, the user-item rating matrix from each dataset was divided such that each client was assigned a non-overlapping subset of users along with their corresponding ratings. Each client owns a private and non-overlapping $m_i\times n $ matrix $M_i$ containing rows of $M$ with $m_1+m_2+\ldots+m_{100}=m$. The rank parameter was set to $r = 5$, $8$, and $13$ for the MovieLens 1M, MovieLens 10M, and Netflix datasets, respectively. 

For training and testing, the observed ratings were randomly partitioned, with 80\% used for training and 20\% reserved for testing. Each algorithm was executed for 100 communication rounds for all datasets. First, we run our algorithm \texttt{FedMC-ADMM} to solve the optimization problem \eqref{model} with
\[
F_i(M_i,U_iV_i)=\frac{1}{2}\|\mathcal{P}_{\Omega_i}(M_i-U_iW_i)\|^2+R_i(U_i),
\]
where $\Omega_i=\{(t,j)\in [m_i]\times [n]|\, (M_i)_{tj}\neq 0 \mbox{ and }(M_i)_{tj}\; \mbox{is in the training set}\}$ and $\mathcal{P}_{\Omega_i}:\mathbb{R}^{m_i\times n}\to \mathbb{R}^{m_i\times n}$ is the linear operator defined by $(P_{\Omega_i}Z)_{tj}=Z_{tj}$ if $(t,j)\in \Omega_i$ and $0$ otherwise for $Z\in \mathbb{R}^{m_i\times n}$. 

The evaluation at each iteration focuses on two metrics: the training objective value and RMSE computed on the test set, defined as
\[
RMSE = \sqrt{\frac{\sum_{i=1}^p \|\mathcal{P}_{T_i}(M_i - U_iV)\|^2}{N_T}},
\]
where  $T_i=\{(t,j)\in [m_i]\times [n]|\, (M_i)_{tj}\neq 0 \mbox{ and }(M_i)_{tj}\; \mbox{is in the testing set}\}$ and $N_T$ is the number of observed ratings in the testing set.

\begin{table}[!htbp]
\centering
\caption{The number of users, items, and ratings included in each dataset.}\label{dataset}
\begin{tabular}{@{}cllll@{}}
\toprule
\multicolumn{2}{l}{Dataset}       & \multicolumn{1}{l}{\#users} & \multicolumn{1}{l}{\#items} & \multicolumn{1}{l}{\#ratings} \\ \midrule
\multirow{2}{*}{MovieLens}  & 1M  & 6,040                       & 3,449                       & 999,714                       \\
                            & 10M & 69,878                      & 10,677                      & 10,000,054                    \\
\multicolumn{1}{l}{Netflix} &     & 480,189                     & 17,770                      & 100,480,507                   \\ \hline
\end{tabular}
\end{table}

\subsection{Comparison between \texttt{FedMC-ADMM} and \texttt{FedMAvg}}

In this experiment, we utilize $\ell_2$-norm squared regularization terms as follows
\[
R_i(U_i) = \frac{\lambda}{2}\|U_i\|_F^2 \quad \text{and} \quad R(V) = \frac{\gamma}{2}\|V\|_F^2,
\]
where the regularization parameters are fixed at $\lambda = \gamma = 10^{-6}$. 
The update rule for \(U_i^{k,l}\) in \eqref{u:update} is given by
\begin{equation}\label{1217f}
    U_i^{k,l} \in \argmin_{U_i \in \mathbb R^{m_i\times r}}\; \langle \mathcal{P}_{\Omega_i}(U_i^{k,l-1} W_i^k - M_i)(W_i^k)^T, U_i \rangle + \frac{L_{W_i^k}}{2} \|U_i - U_i^{k,l-1}\|^2 + \frac{\lambda}{2} \|U_i\|_F^2,
\end{equation}
where \(L_{W_i^k} = \|W_i^k (W_i^k)^T\|_F\) as discussed in Remark~\ref{RemP}. The closed-form solution of \eqref{1217f} is
\begin{equation}
    U_i^{k,l} = \frac{U_i^{k,l-1} L_{W_i^k} - \mathcal{P}_{\Omega_i}(U_i^{k,l-1} W_i^k - M_i)(W_i^k)^T}{L_{W_i^k} + \lambda}.
\end{equation}
The update rule for \(W_i^{k,l}\) in \eqref{v:update} is
\begin{equation}\label{1217h}
    W_i^{k,l} = \frac{L_{U_i^{k+1}}/p W_i^{k,l-1} + \beta V^k - (U_i^{k+1})^T \mathcal{P}_{\Omega_i}(U_i^{k+1} W_i^{k,l-1} - M_i)/p - Y_i^k}{L_{U_i^{k+1}}/p + \beta}
\end{equation}
with  \( L_{U_i^{k+1}} = \|(U_i^{k+1})^T U_i^{k+1}\|_F \). Finally, \(V^{k+1}\) in \eqref{s:update} is computed by
\begin{equation}
    V^{k+1} = \frac{\sum_{i=1}^p [\beta W_i^{k+1} + Y_i^{k+1}]}{p \beta + \gamma}.
\end{equation}

The algorithm \texttt{FedMAvg} with hyper-parameters proposed in \cite{wang2022federated} is recalled below. 
At each communication round, \texttt{FedMAvg} updates \(U_i^{k+1}\) for all clients \(i \in [p]\) via the following iterative process
\[U_i^{k,l} = U_i^{k,l-1} - \frac{ \mathcal{P}_{\Omega_i}\left(U_i^{k,l-1} V^k - M_i\right)(V^k)^\top + \lambda U_i^{k,l-1} }{c^k}
\]
for \(l = 1, \dots, Q_1\) and \(U_i^{k,0} = U_i^k\). After \(Q_1\) iterations, \(U_i^{k+1}\) is set as \(U_i^{k,Q_1}\). Next, \texttt{FedMAvg} updates \(W_i^{k+1}\) through
\[
W_i^{k,l} = W_i^{k,l-1} - \frac{(U_i^{k+1})^\top \mathcal{P}_{\Omega_i}\left(U_i^{k+1} W_i^{k,l-1} - M_i\right)/p + \gamma W_i^{k,l-1} }{d_i^k},
\]
for \(l = 1, \dots, Q_2\) and $W_i^{k,0} = V^k$. Upon completing \(Q_2\) steps, \(W_i^{k+1}\) is set as \(W_i^{k,Q_2}\). The step size \(d_i^k\) is chosen as
\[
d_i^k = 5 \lambda_{\max}\left((U_i^{k+1})^\top U_i^{k+1}\right).
\]
At the server, \texttt{FedMAvg} aggregates client updates to compute \(V^{k+1}\)
\[
V^{k+1} = \frac{1}{m}\sum_{i \in S_k} W_i^{k+1},
\]
where \(S_k\) denotes the subset of selected clients at round \(k\), and \(m = |S_k|\). The number of inner iterations is set to \(Q_1 = Q_2 = 10\). For the algorithm \texttt{FedMC-ADMM}, the inner iteration number is fixed similarly at \(N = 10\).

During each communication round, 10 users are randomly sampled to participate in the updates. The average values of the objective function and RMSE over communication rounds are shown in Figure~\ref{exp1}.

\begin{figure*}[!htpb] 
\vspace{-1ex}
\begin{center}
\begin{tabular}{cc}
\includegraphics[width=0.45\linewidth]{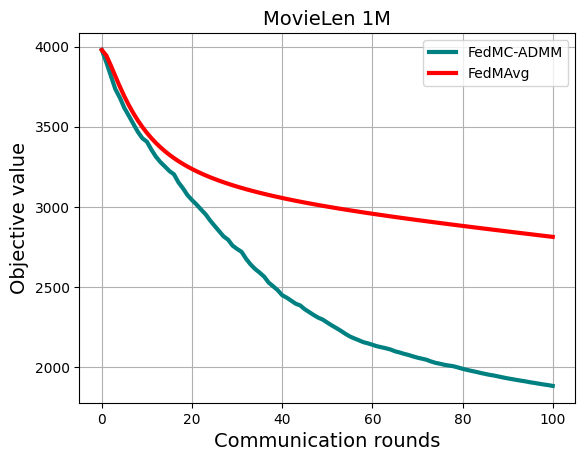} & 
\includegraphics[width=0.45\linewidth]{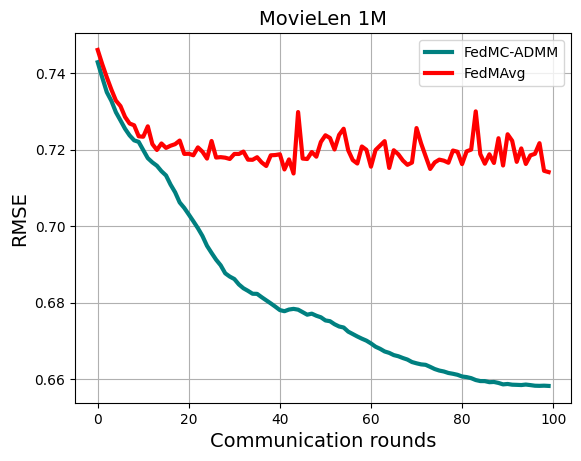}\\
\includegraphics[width=0.45\linewidth]{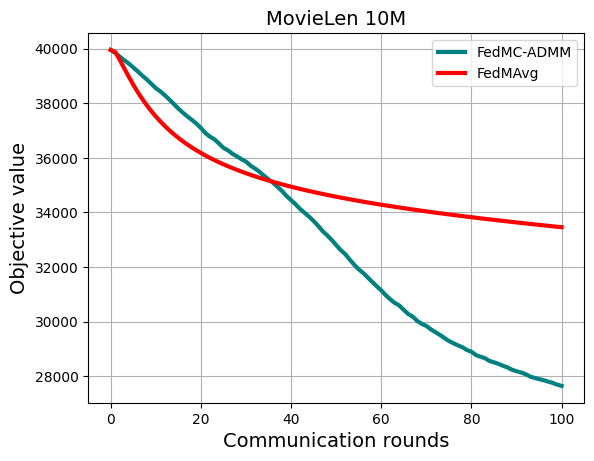} & 
\includegraphics[width=0.45\linewidth]{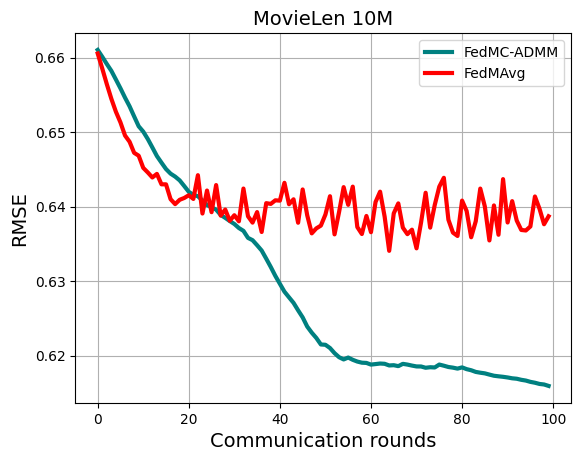}\\
\includegraphics[width=0.45\linewidth]{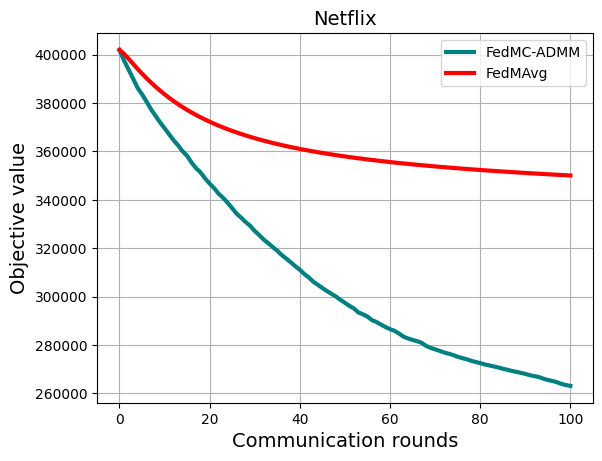} &
\includegraphics[width=0.45\linewidth]{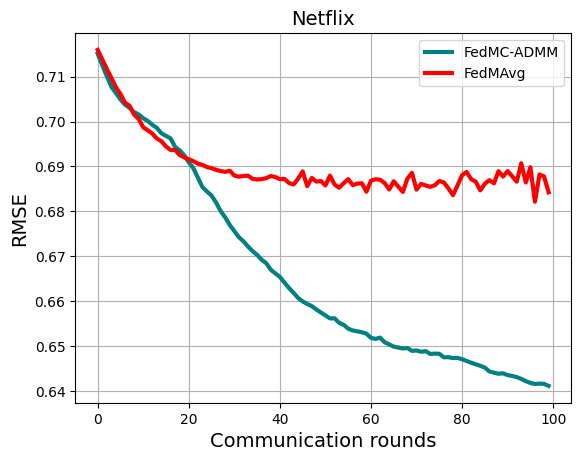}
\end{tabular}
\caption{Evolution of the training objective value and testing RMSE with respect to the communication round of \texttt{FedMC-ADMM} and \texttt{FedMAvg} on Movielens 1M, 10M, and Netflix. } \label{exp1}
\end{center}
\vspace{-1ex}
\end{figure*}
As shown in Figure \ref{exp1}, \texttt{FedMC-ADMM} consistently outperforms \texttt{FedMAvg} across all datasets. In particular, the left column demonstrates that \texttt{FedMC-ADMM} achieves faster and more significant reductions in the objective value compared to \texttt{FedMAvg}, particularly during the last communication rounds. Meanwhile, the right column illustrates that \texttt{FedMC-ADMM} achieves lower RMSE on the test set across all datasets, which highlights its superior generalization capabilities. Importantly, we observed that as the dataset size increases, from MovieLens 1M to Netflix, the performance gap between \texttt{FedMC-ADMM} and \texttt{FedMAvg} becomes more pronounced, indicating that \texttt{FedMC-ADMM} scales more effectively with larger datasets.

\subsection{Effect of number of inner iterations}

In this experiment, we investigate the effect of varying the number of inner iterations on the performance of \texttt{FedMC-ADMM}. For this purpose, we consider \(\ell_1\)-norm regularization terms, defined as follows
\begin{equation}\label{l1-norm}
    R_i(U_i) = \lambda \|U_i\|_1 \quad \text{and} \quad R(V) = \gamma \|V\|_1,
\end{equation}
where the regularization parameters are fixed at \(\lambda = \gamma = 10^{-6}\). We also employ the local loss \(f_i(U_i, W_i) = \frac{1}{2} \|\mathcal{P}_{\Omega_i}(M_i - U_i W_i)\|^2\).

The update rule for \(U_i^{k,l}\) in \eqref{u:update} is given by
\begin{equation}\label{1217g}
    U_i^{k,l} \in \argmin_{U_i \in \mathbb R^{m_i\times r}} \langle \mathcal{P}_{\Omega_i}(U_i^{k,l-1} W_i^k - M_i)(W_i^k)^T, U_i \rangle + \frac{L_{W_i^k}}{2} \|U_i - U_i^{k,l-1}\|^2 + \lambda \|U_i\|_1,
\end{equation}
where \(L_{W_i^k} = \|W_i^k (W_i^k)^T\|_F\) as discussed in Remark~\ref{RemP}. The closed-form solution of \eqref{1217g} is obtained using the soft-thresholding operator \(\mathcal{S}\), and is given by
\begin{equation}
    U_i^{k,l} = \mathcal{S}\left(U_i^{k,l-1} -\mathcal{P}_{\Omega_i}(U_i^{k,l-1} W_i^k - M_i)(W_i^k)^T/L_{W_i^k}, \lambda/L_{W_i^k}\right),
\end{equation}
where the soft-thresholding operator \(\mathcal{S}\) is defined as
\begin{equation}
    \mathcal{S}(Q, \tau)_{tj} = \left[ |Q_{tj}| - \tau \right]_+ \operatorname{sign}(Q_{tj}).
\end{equation}

The update rule for \(W_i^{k,l}\) in \eqref{v:update} remains the same as in \eqref{1217h}. Finally, the update rule for \(V^{k+1}\) in \eqref{s:update} is
\begin{equation}
    V^{k+1} = \mathcal{S}\left(\sum_{i=1}^p \left[ W_i^{k+1} + Y_i^{k+1}/\beta \right]/p, \gamma/(p\beta)\right).
\end{equation}

To examine the impact of the number of inner iterations $N$ on the algorithm's convergence, we varied $N$ from 5 to 30. Figure~\ref{exp2} illustrates how the average objective function value and RMSE evolve with training time, which highlights the relationship between $N$ and convergence behavior.

Figure~\ref{exp2} shows that smaller values of $N$ result in faster convergence of the objective value. This effect is particularly pronounced for $N = 5$ and $N = 10$, where the objective value decreases rapidly compared to larger $N$ values. These results suggest that fewer inner iterations can accelerate the algorithm's convergence in terms of training time. The right plot further demonstrates that smaller $N$ values achieve lower RMSE on the test set more quickly, indicating improved generalization within a shorter training period. However, as $N$ increases, the improvements in RMSE diminish, and the computational cost increases. This underscores the efficiency of choosing smaller $N$ values in scenarios where fast convergence is a priority.

\begin{figure*}[!htpb] 
\vspace{-1ex}
\begin{center}
\begin{tabular}{cc}
\includegraphics[width=0.45\linewidth]{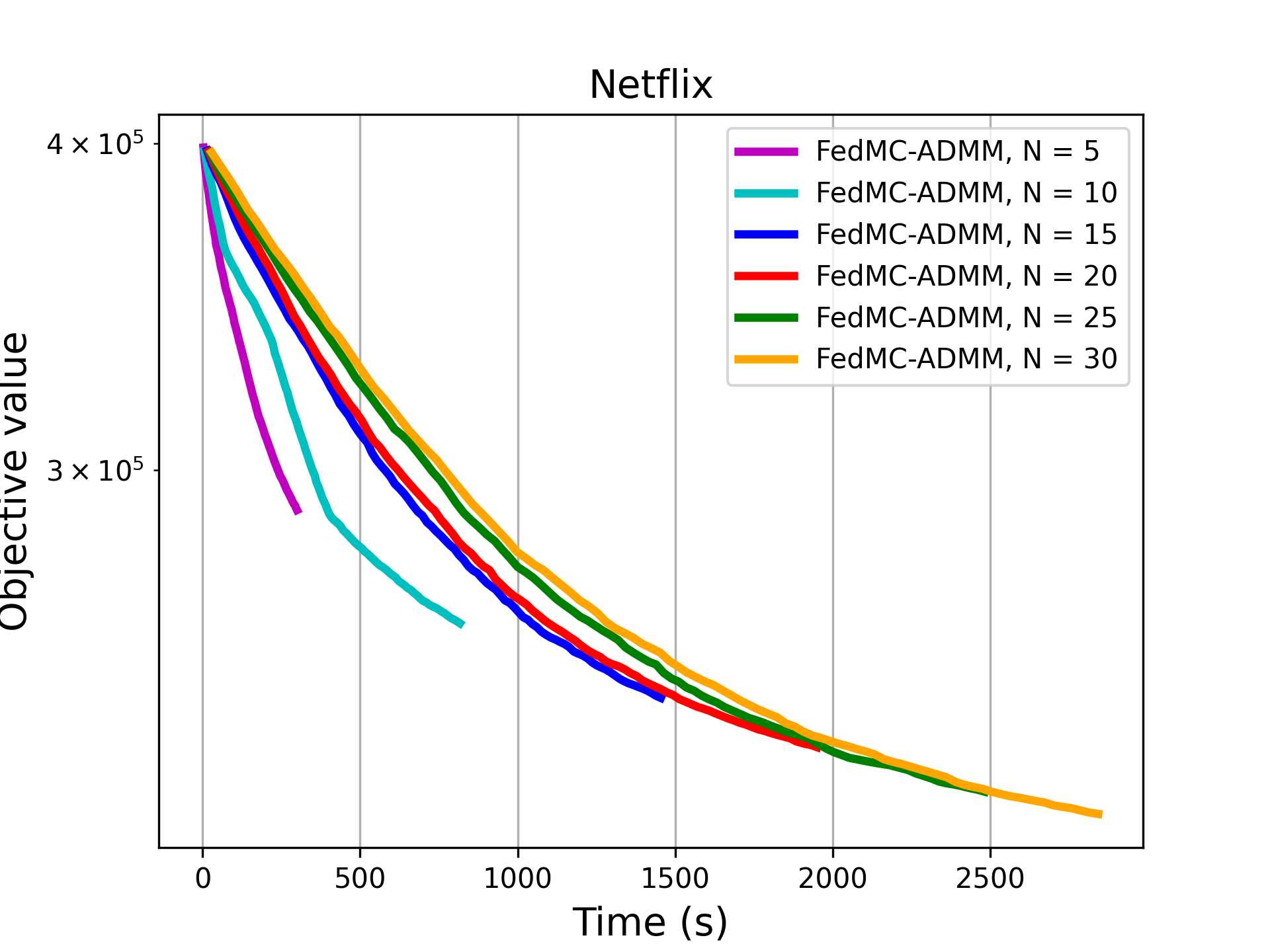} & 
\includegraphics[width=0.45\linewidth]{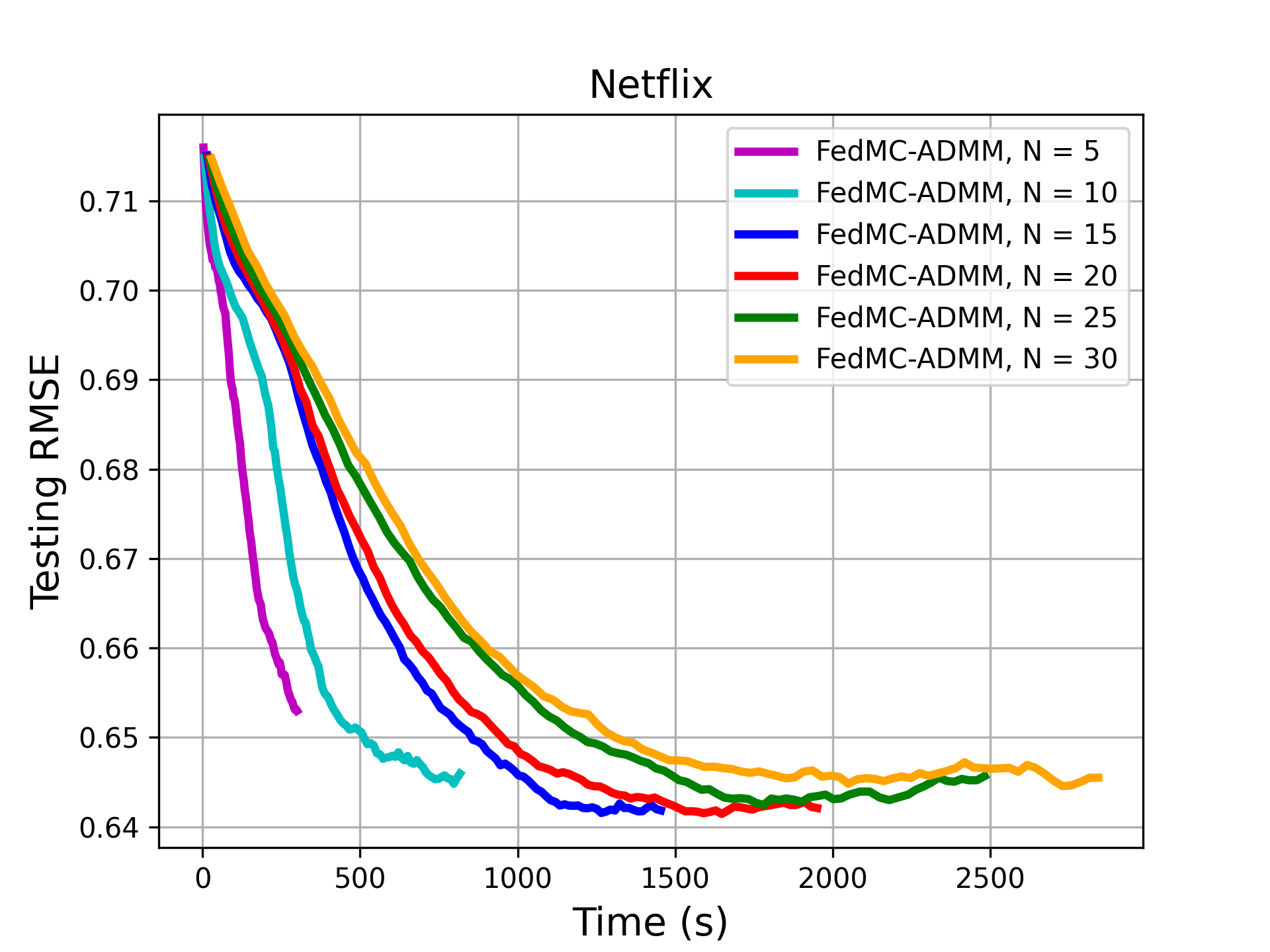}
\end{tabular}
\caption{Effect of the number of inner iterations ($N$) on the convergence of \texttt{FedMC-ADMM} for the Netflix dataset. The left panel shows the evolution of the objective value, while the right panel presents the testing RMSE as a function of training time. Results are shown for $N = 5, 10, 15, 20, 25, 30$. } \label{exp2}
\end{center}
\vspace{-1ex}
\end{figure*}

\subsection{Effect of regularization parameter}

In this experiment, we revisit the $\ell_1$-norm regularization terms, defined in \eqref{l1-norm}, to study the impact of the regularization parameter on the performance of \texttt{FedMC-ADMM}. The regularization parameters are set as $\gamma = 10 \cdot \lambda \cdot \beta$, with $\lambda$ varying across the values $\{1\text{e-}6, 1\text{e-}5, 1\text{e-}4, 1\text{e-}3, 1\text{e-}2, 1\text{e-}1\}$. 

Figure~\ref{exp3} presents three plots illustrating the effect of varying $\lambda$ on the convergence of the objective value (left), the testing RMSE (center), and the average proportion of nonzero entries (\texttt{nnz}) in the matrices \(\mathbf{U}\) and \(V\) (right).

From Figure~\ref{exp3}, we observe that smaller regularization parameters ($\lambda \leq 0.001$) provide comparable results in terms of the training objective value and testing RMSE. We also learned that among these, $\lambda = 0.001$ strikes the best balance by minimizing the objective value, maintaining a low testing RMSE, and preserving matrix sparsity. In contrast, larger $\lambda$ values (e.g., $\lambda = 0.1$) lead to solutions where the matrices $\mathbf{U}$ and $V$ quickly become entirely sparse after only a few communication rounds.

\begin{figure*}[!htpb] 
\vspace{-1ex}
\begin{center}
\begin{tabular}{ccc}
\includegraphics[width=0.33\linewidth]{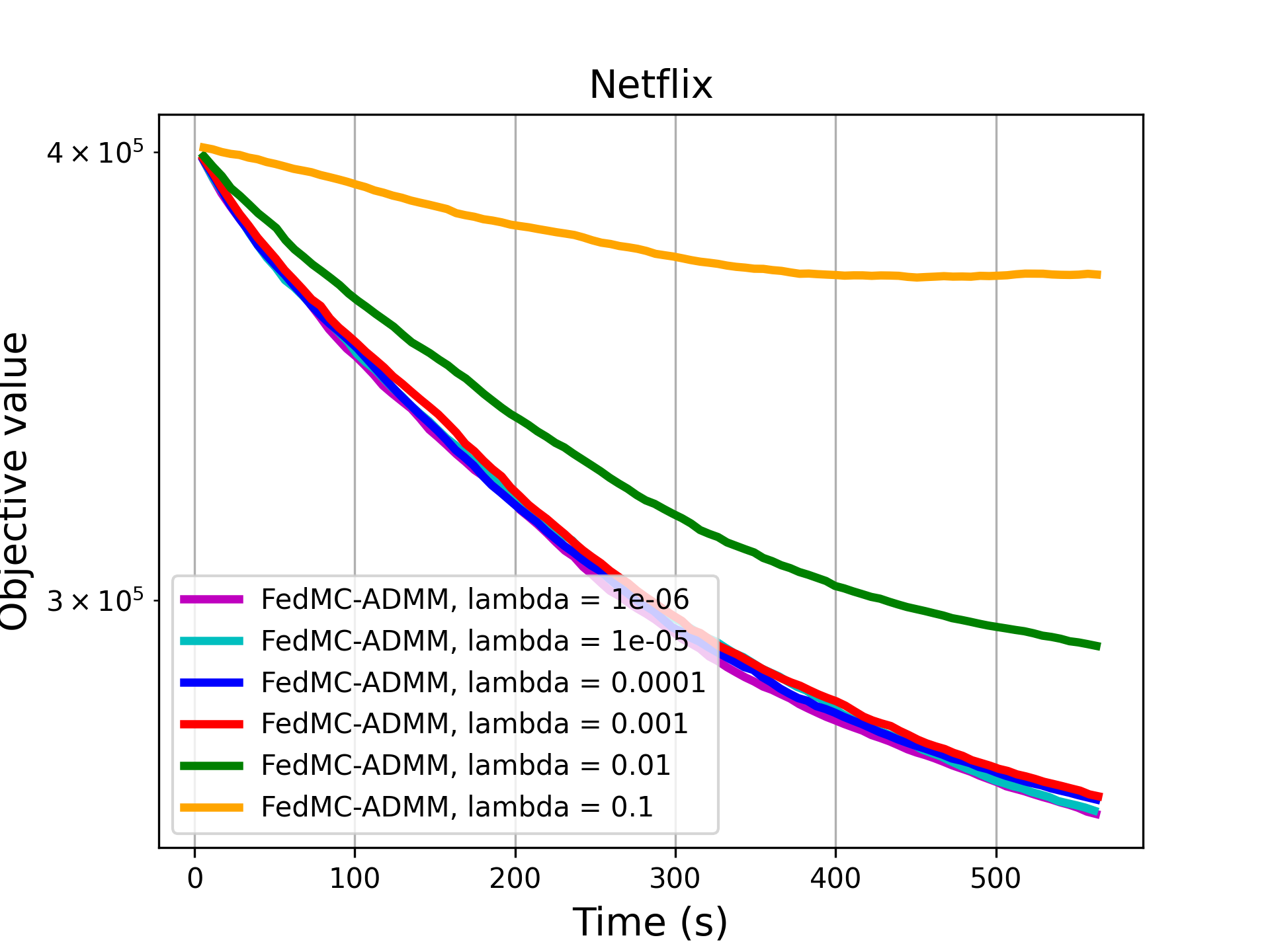} & 
\includegraphics[width=0.33\linewidth]{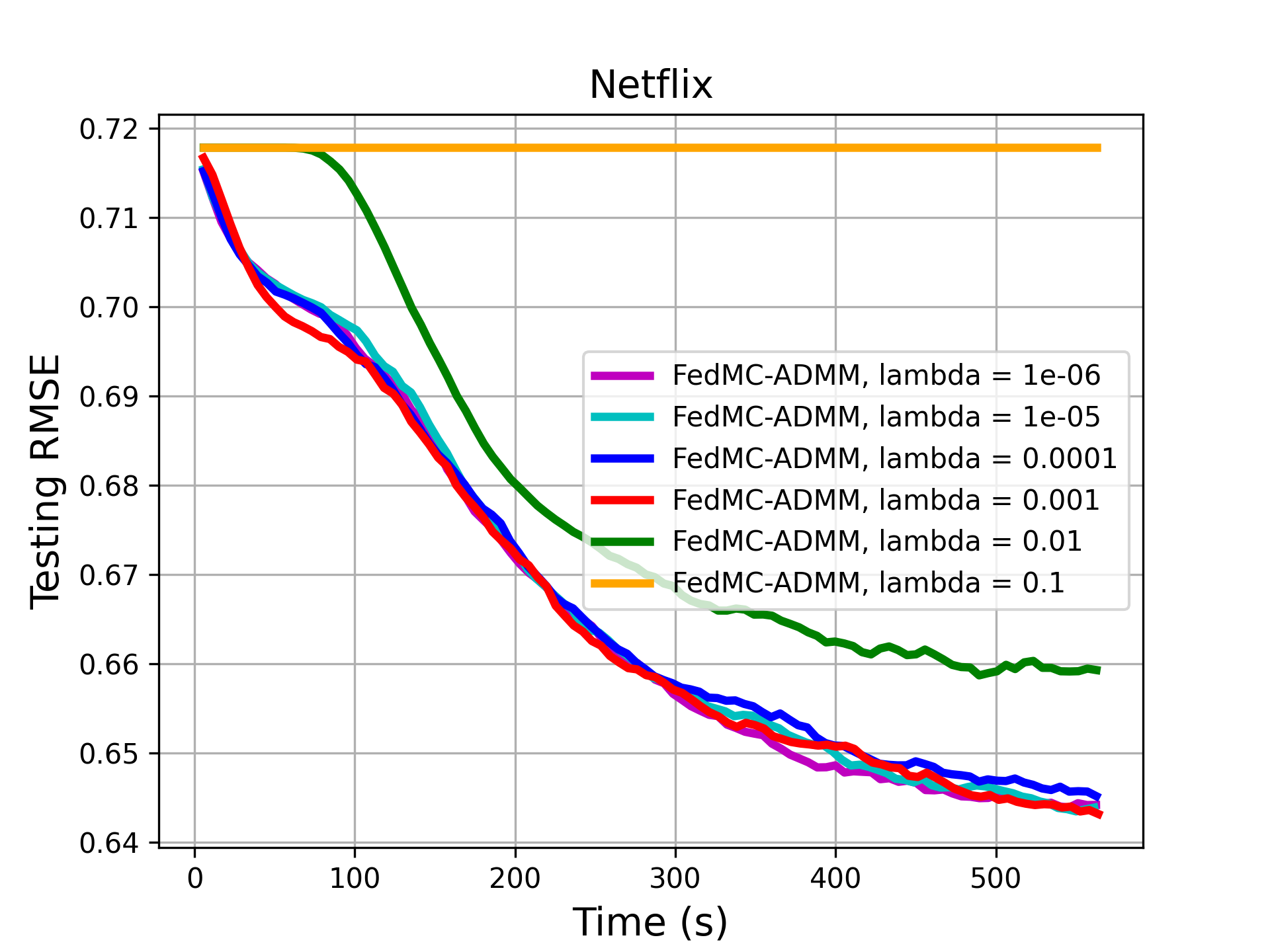} & 
\includegraphics[width=0.3\linewidth]{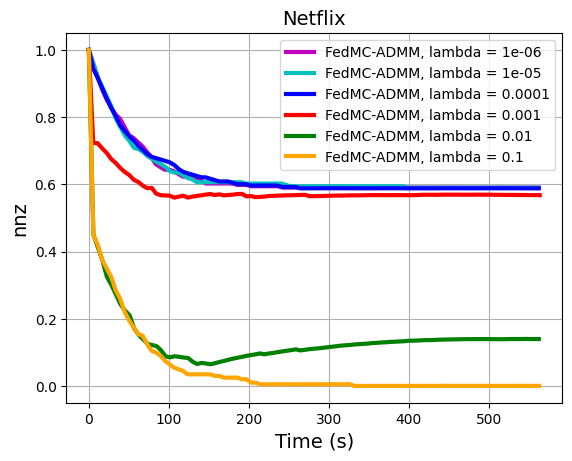}
\end{tabular}
\caption{Effect of the regularization parameter $\lambda$ on the performance of \texttt{FedMC-ADMM} for the Netflix dataset. The left plot shows the evolution of the objective value, the center plot presents the testing RMSE as a function of training time, and the right plot illustrates the average number of nonzero entries (\texttt{nnz}) in the matrices \(\mathbf{U}\) and \(V\) over time. Results are shown for $\lambda = 1\text{e-}6, 1\text{e-}5, 1\text{e-}4, 1\text{e-}3, 1\text{e-}2, 1\text{e-}1$.} \label{exp3}
\end{center}
\vspace{-1ex}
\end{figure*}

\section{Conclusion}\label{sec:conclusion}
In this work, we introduced \texttt{FedMC-ADMM}, a novel algorithmic framework that integrates the Alternating Direction Method of Multipliers with a randomized block-coordinate strategy and alternating proximal gradient steps to address the federated matrix completion problem. Our approach is designed to handle the challenges of nonconvex, nonsmooth, and multi-block optimization problems that arise in federated matrix completion problems. By leveraging a randomized block-coordinate strategy, \texttt{FedMC-ADMM} effectively mitigates communication bottlenecks inherent in FL while preserving privacy guarantees for user data.

Theoretically, we established the convergence properties of \texttt{FedMC-ADMM} under mild assumptions. Specifically, we proved that the algorithm almost surely achieves subsequential convergence to a stationary point. We further demonstrated that \texttt{FedMC-ADMM} achieves a convergence rate of $\mathcal{O}(K^{-1/2})$ and a communication complexity of $\mathcal{O}(\epsilon^{-2})$ to reach an $\epsilon$-stationary point, matching the best-known theoretical bounds in the literature.

Empirically, we validated the effectiveness of  \texttt{FedMC-ADMM} by applying it to the federated matrix completion problem in federated recommendation systems. Our experimental results on real-world datasets, including MovieLens 1M, 10M, and Netflix, showed that \texttt{FedMC-ADMM} consistently outperforms existing FL algorithms in terms of convergence speed and testing RMSE.  Furthermore, we observed that the advantages of \texttt{FedMC-ADMM} become increasingly pronounced as dataset sizes grow, underscoring its scalability and suitability for large-scale federated applications.

This work not only highlights the advantages of integrating ADMM with randomized block-coordinate updates and proximal steps but also sets a new benchmark for federated optimization in matrix completion tasks. Future research could extend this framework to other domains within FL, such as graph-based learning or dynamic systems where challenges like nonconvexity, non-smoothness, and communication constraints also emerge.

\bibliographystyle{plainnat}



\end{document}